\documentclass{article}

\usepackage{arxiv}

\usepackage[utf8]{inputenc} 
\usepackage[T1]{fontenc}    
\usepackage{hyperref}       
\usepackage{url}            
\usepackage{booktabs}       
\usepackage{amsfonts}       
\usepackage{nicefrac}       
\usepackage{microtype}      
\usepackage{graphicx}
\usepackage{natbib}
\usepackage{doi}

\usepackage{amsmath}
\usepackage{amsthm}
\usepackage{amssymb}
\usepackage{mathtools}
\usepackage{times}
\usepackage{multirow}
\usepackage{caption}
\usepackage{subcaption}
\usepackage{tikz}

\newcommand{\lvfe}{\lambda_{\text{vfe}}}
\newcommand{\lvge}{\lambda_{\text{vge}}}
\newcommand{\R}{\mathbb R}
\newcommand{\qkj}{k_j}
\newcommand{\truek}{\tilde{\bfk}}
\newcommand{\truekj}{\tilde{k}_j}

\newcommand{\qlj}{\lambda_j}

\newcommand{\truelj}{\tilde{\lambda}_j}
\newcommand{\qhj}{h_j}
\newcommand{\trueh}{\tilde{\bfh}}
\newcommand{\truehj}{\tilde{h}_j}
\newcommand{\qnot}{{q_0}}
\newcommand{\qnotj}{{q_0^j}}
\newcommand{\bfl}{{\boldsymbol \lambda}}
\newcommand{\bfh}{{\boldsymbol h}}
\newcommand{\bfk}{{\boldsymbol k}}
\newcommand{\bfbeta}{{\boldsymbol \beta}}
\newcommand{\Crest}{C(\lambdarest, \krest, \betarest)}

\newcommand{\gstar}{g}

\newcommand{\truedist}{p_0}
\newcommand{\triplet}{(p,\truedist,\varphi)}

\newcommand{\varparams}{{\bfl, \bfk, \bfbeta}}

\newcommand{\elbo}{\mathrm{ELBO}}

\newcommand{\normev}{\bar{Z}}
\newcommand{\detnormev}{\bar{Z}_K}
\newcommand{\PsiK}{\Psi(q_0)}
\newcommand{\PsiKn}{\Psi_n(q_0)}
\newcommand{\rlct}{\lambda\triplet}
\newcommand{\multiplicity}{m\triplet}
\newcommand{\transprior}{\varphi(\gstar(\xi)) \vert \gstar'(\xi)\vert}
\newcommand{\baseQ}{\mathcal Q_0}
\newcommand{\normvfe}{\bar{F}_{vb}(n)}
\newcommand{\minnormvfe}{\bar{F}^*_{vb}(n)}

\newcommand{\betarest}{\bfbeta_{[-1]}}
\newcommand{\lambdarest}{\bfl_{[-1]}}
\newcommand{\krest}{\bfk_{[-1]}}

\DeclarePairedDelimiterX{\infdivx}[2]{(}{)}{%
  #1\;\delimsize\|\;#2%
}
\newcommand{\kl}{\mathrm{KL}\infdivx}

\def\R{{\ensuremath {\mathbb R}}}

\def\E{{\ensuremath {\mathbb E}}}
\def\C{{\ensuremath {\mathbb C}}}

\newcommand{\abs}[1]{\left|\, #1\,\right|}

\newcommand{\sqbrac}[1]{\left[ #1 \right]}


\theoremstyle{plain}
\newtheorem{theorem}{Theorem}[section]

\newtheorem{lemma}[theorem]{Lemma}

\theoremstyle{definition}

\theoremstyle{remark}

\title{Variational Bayesian Neural Networks via Resolution of Singularities}


\author{ \href{https://orcid.org/0000-0002-6842-2352}{\includegraphics[scale=0.06]{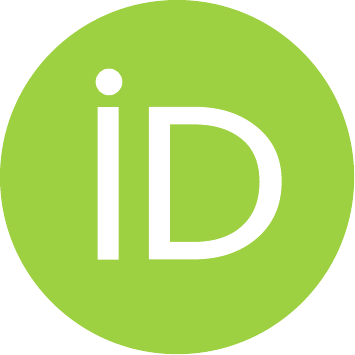}\hspace{1mm} Susan Wei} \\
	School of Mathematics and Statistics\\
	University of Melbourne\\
	Melbourne, Victoria \\
	\texttt{susan.wei@unimelb.edu.au} \\
	\And
	\href{https://orcid.org/0000-0002-7927-1174}{\includegraphics[scale=0.06]{orcid.pdf}\hspace{1mm}Edmund Lau} \\
	School of Mathematics and Statistics\\
	University of Melbourne\\
        Melbourne, Victoria\\
	\texttt{elau1@student.unimelb.edu.au} \\
}

\date{}


\hypersetup{
pdftitle={Variational Bayesian Neural Networks via Resolution of Singularities},
pdfsubject={},
pdfauthor={Susan Wei, Edmund Lau},
pdfkeywords={First keyword, Second keyword, More},
}

\begin{document}
\maketitle

\begin{abstract}
In this work, we advocate for the importance of singular learning theory (SLT) as it pertains to the theory and practice of variational inference in Bayesian neural networks (BNNs). 
To begin, using SLT, we lay to rest some of the confusion surrounding discrepancies between downstream predictive performance measured via e.g., the test log predictive density, and the variational objective. 
Next, we use the SLT-corrected asymptotic form for singular posterior distributions to inform the design of the variational family itself. 
Specifically, we build upon the idealized variational family introduced in \citet{bhattacharya_evidence_2020} which is theoretically appealing but practically intractable. 
Our proposal takes shape as a normalizing flow where the base distribution is a carefully-initialized generalized gamma. 
We conduct experiments comparing this to the canonical Gaussian base distribution and show improvements in terms of variational free energy and variational generalization error.

\end{abstract}
\keywords{Normalizing Flow \and Real Log Canonical Threshold \and Singular Learning Theory \and Singular Models \and Test log-likelihood \and Variational Free Energy \and Variational Inference \and Variational Generalization Error}

\section{Introduction}
A Bayesian neural network (BNN) \cite{mackay_probable_1995} is a neural network endowed with a prior distribution $\varphi$ on its weights $w$. Despite their theoretical appeal \cite{lampinen_bayesian_2001, wang_survey_2020}, applying BNNs in practice is not without significant challenges. MCMC and its variants, while widely considered the gold standard, can be prohibitively expensive in terms of computation. On the other hand, fast alternatives such as variational inference may result in \textit{uncontrolled} approximations.

\begin{figure}[t!]
    \centering
    \includegraphics[width=0.4\textwidth]{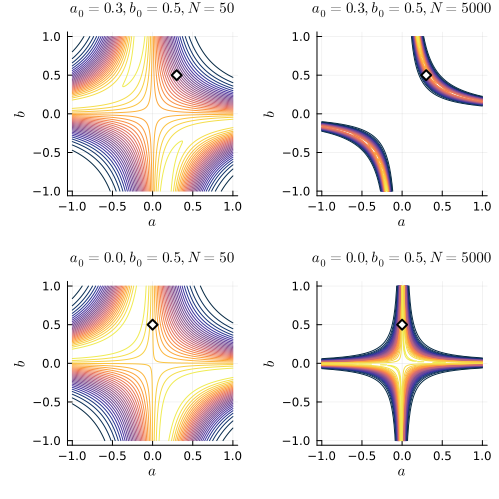}
    \caption
    {
    Posterior density contour plot for a 2D $\tanh$-regression model, $p(y | x, a, b) \propto \exp(y - a \tanh(bx))$. The white diamond marks the true parameter $(a_0, b_0)$ used to generate the dataset $\mathcal D_n$. Each row shows a different true distribution, while each column shows a different sample size $n$. When $a_0 b_0 = 0$ as in the second row, the set of true parameters $ W_0$ is not a singleton and contains a singularity at the origin. It is worth noticing that, for a singular model, even when the truth is not at a singularity (first row), the posterior is still far from being locally Gaussian even at sample size $n=5000$.
    }
    \label{fig:tanh-regression-likelihood-plot}
\end{figure}

In this work, we mine insights from \textbf{singular learning theory} (SLT) \cite{watanabe_algebraic_2009} to explain and improve upon certain aspects of BNNs. Roughly speaking, a model is (strictly) \textbf{singular} if
the parameter-to-model mapping is not one-to-one and the likelihood function does not look Gaussian\footnote{These features should not be viewed as pathological, see ``Deep learning is singular and that's good" by \cite{wei_deep_2022}.}. That neural networks are singular is well documented \cite{sussmann_uniqueness_1992, watanabe_generalization_2000,watanabe_learning_2001,fukumizu_likelihood_2003, watanabe_almost_2007}. We refer the readers to \citet{wei_deep_2022} for a detailed proof in the case of a standard feedforward network. The singular nature of BNNs has interesting implications for the posterior distribution, see Figure \ref{fig:tanh-regression-likelihood-plot}.

Let $(x,y)$ denote the input-target pair modeled jointly as $ p(x,y|w) = p(y \vert x, w)p(x)$ where $w \in \R^d$ is the model parameter. Let $p(y|x,w)$ be a neural network model with functional model $f$, by which we mean $y = f(x,w) + \epsilon$ where $\epsilon$ is some random variable. For example, if we have Gaussian additive noise $\epsilon$, the conditional distribution could be modelled as $\mathcal N(y | f(x,w), \sigma^2 I)$ where $f$ is a feedforward ReLU network with weights $w$.

The central quantity of interest in BNNs is the intractable posterior distribution over the neural network weights,
$$
p(w\vert \mathcal D_n) = \frac{\prod_{i=1}^n p(y_i\vert x_i,w) \varphi(w)}{Z(n)},
$$ 
where $\mathcal D_n = \{(x_i,y_i)\}_{i=1}^n$ is a dataset of $n$ input-output pairs. The normalizing constant,
$$
Z(n)=\int \prod_{i=1}^n p(y_i\vert x_i,w)  \varphi(w) \,d w,
$$
is variously known as the \textbf{model evidence} and the \textbf{marginal likelihood}. Define the \textbf{empirical entropy} of the training data,
$$S_n = - \frac{1}{n} \sum_{i=1}^n \log p_0(y_i \vert x_i).$$
We shall call $$\normev(n) = -\log Z(n) - nS_n$$ the \textbf{normalized evidence}. 
Let us call 
$F(n) := -\log Z(n)$
the \textbf{Bayes free energy} and 
$\bar{F}(n) := -\log \normev(n)$
its normalized version.

Unlike prediction in traditional neural networks, prediction in BNNs proceeds by marginalization, i.e., averaging over all possible values of the network weights. Namely, prediction in BNNs makes use of the \textbf{Bayes posterior predictive distribution},
\begin{equation}
	p(y  \vert x,\mathcal D_n) := \int p(y \vert x, w) p(w \vert \mathcal D_n) \,d w.
	\label{bayes_ppd}
\end{equation}
With \eqref{bayes_ppd}, we can calculate prediction uncertainties as well as obtain better calibrated predictions \cite{heek_well-calibrated_2018, osawa_practical_2019, maddox_simple_2019}. 

In Section \ref{sec:Bayes_posterior_predictive}, we recapitulate from the perspective of SLT the predictive advantages of BNNs over traditional neural networks. Specifically, SLT shows that the Bayes posterior predictive distribution in \eqref{bayes_ppd} has lower generalization error compared to MLE or MAP point estimates.

Despite compelling arguments for employing BNNs, we must reckon with the fact that they can only ever be applied \textit{approximately}. Among approximate techniques, a major class is represented by scaling classic MCMC to modern settings of large datasets and deep neural networks \cite{welling_bayesian_2011,chen_stochastic_2014,zhang_cyclical_2020}. In this paper, we instead turn our focus to variational inference, which is particularly suited to scaling BNNs to large datasets. 

All variational inference techniques are characterized by two ingredients. First, a family of densities $\mathcal Q$, often called the variational family, is posited. Second, some $q^* \in \mathcal Q$ is found via optimization according to some criterion that measures closeness to the desired target density. In this work, we seek to approximate the posterior density and we will employ the conventional Kullback-Leibler divergence. This leads to the optimization problem,
\begin{equation}
	\min_{q \in \mathcal Q} \kl{q(w) }{p(w|\mathcal D_n) }.
	\label{kl_vi_obj}
\end{equation}
This is equivalent to minimizing the so-called \textbf{normalized\footnote{Throughout this paper, we work with normalized quantities for ease of exposition. The asymptotics presented hold equally for the unnormalized counterparts.} variational free energy (VFE)}, 
$$
\normvfe := \E_q  n K_n(w)  + \kl {q(w)}  { \varphi(w)}.  
$$ 
It is easy to see that $\normvfe \ge \bar{F}(n)$ with equality if and only if the variational distribution is exactly equal to the posterior. 
Readers are likely more familiar with the variational objective of maximizing the so-called \textbf{evidence lower bound (ELBO)} which is simply related to the (normalized) VFE via $\operatorname{ELBO} = -\normvfe$.

Let $q^* \in \mathcal Q$ be a minimizer of \eqref{kl_vi_obj}. et us call the variational approximation to \eqref{bayes_ppd} given by 
\begin{equation}
	p_{vb}(y|x,\mathcal D_n) := \int p(y|x,w) q^*(w) \,dw,
	\label{q_posterior_predictive}
\end{equation}
the \textbf{induced predictive distribution}. 
We can measure the predictive accuracy of $p_{vb}$ using once again the KL divergence, i.e., 
$$
G_n(p_{vb}(y|x, \mathcal D_n)):=\kl{p_0(y|x)}{p_{vb}(y|x, \mathcal D_n)},
$$ 
which we shall call the \textbf{variational generalization error (VGE)}. Per the discussion in Section \ref{sec:Bayes_posterior_predictive}, this is, up to a constant and a sign flip, nothing more than the typical \textbf{test log predictive density} \cite{gelman_understanding_2014} commonly employed in variational inference evaluation.

We shall see in Section \ref{sec:two_var_RLCTs} that, surprisingly, the VGE may be arbitrarily high even for a variational family whose minimum VFE is close to optimality. In other words, it is not guaranteed that minimizing \eqref{kl_vi_obj} results in good downstream predictive performance. 
The outlook is not entirely bleak. Depending on the relationship between two critical quantities of variational inference --  the \textbf{MVFE coefficient $\lvfe$} and the \textbf{VGE coefficient $\lvge$} -- the generalization error of the induced predictive distribution may be controllable via minimizing the VFE.

Clarification of the relationship between the two variational coefficients for most common variational learning problems is an open problem, which we leave aside for future work. We will assume the variational coefficients are related \textit{favorably}, in a manner which will be made clear in Section \ref{sec:two_var_RLCTs}, and proceed to design a variational family whose \textbf{variational approximation gap} is small. The proposal is predicated on an important SLT result which states that, roughly speaking, the posterior distribution over the parameters of a singular model is not asymptotically Gaussian, but can still be put into an explicit standard form via the \textbf{resolution of singularities}.


\section{Singular learning theory}
\label{sec:slt}
In this section, we give a succinct overview of key concepts from SLT. We focus in particular on what SLT has to say about the behavior of the posterior distribution in strictly singular models.
Let us assume the parameter space $W$ is a compact set in $\mathbb R^d$ and
$
\truedist(x,y) =  p_0(y \vert x) p(x)
$
is the true data-generating mechanism. Throughout, we suppose there exists $w_0 \in W$ such that $p_0(y \vert x) = p(y \vert x, w_0).$ In the parlance of SLT, this condition is known as \textbf{realizability}. Let $\varphi(w)$ be a compactly-supported prior. We shall refer to 
$
( p(\cdot, \cdot), p_0(\cdot, \cdot), \varphi(\cdot) )
$
as a \textbf{model-truth-prior triplet}. The roles played by compactness and realizability in singular learning theory are discussed in Appendix \ref{appendix:assumptions}.

Define $K(w)$ to be the Kullback-Leibler divergence between the truth and the model, i.e., 
$$
K(w) :=\kl {p_0(x,y) }{ p(x,y\vert w) }.
$$
Following \citet{watanabe_algebraic_2009}, we say a model is \textbf{regular} if 1) it is identifiable, i.e., the map $ w \mapsto p( \cdot, \cdot | w)$ from parameter to model is one-to-one and 2) its Fisher information matrix $I(w)$ is positive definite for arbitrary $w \in W$.
We call a model \textbf{strictly singular} if it is not regular. The term singular will refer to either regular or strictly singular models. 
See Figure \ref{fig:tanh-regression-likelihood-plot} for an example of a strictly singular model with two truth settings. This figure illustrates an important lesson: for strictly singular models, even when the true parameter set $W_0 := \{ w : K(w) = 0\}$ does not contain singularities, the posterior distribution is still far from Gaussian.

The following theorem from \citet{watanabe_algebraic_2009}, adapted for notational consistency, gives precise conditions for the existence of \textbf{resolution maps}, algebraic-geometrical transformations which enables $K(w)$ to be locally written as a \textit{monomial}, i.e., a product of powers of variables such as in the right-hand-side of \eqref{kmono}. The result is itself based on Hironaka's resolution of singularities, a celebrated result in modern algebraic geometry.

To prepare, let $W_\epsilon = \{w \in W: K(w) \le \epsilon\}$ for some small positive constant $\epsilon$ and $W_\epsilon^{(R)}$ be some real open set such that $W_\epsilon \subset W_\epsilon^{(R)}$. The theorem below will make use of the multi-index notation: for a given $\xi = (\xi_1,\ldots,\xi_d) \in \mathbb R^d$, define
$
w^{\bfk} := w_1^{k_1}\cdots w_d^{k_d}
$
where the multi-index $\bfk = (k_1,\ldots,k_d)$ with each $k_j$ a nonnegative integer. Due to space constraints, Fundamental Conditions I and II required below are stated and discussed in Appendix \ref{appendix:assumptions}.

\begin{theorem}[Theorem 6.5 of \cite{watanabe_algebraic_2009}]
	Suppose the model-truth-prior triplet $\triplet$ satisfies Fundamental Conditions I and II with $s=2$. We can find a real analytic manifold $M^{(R)}$ and a proper and real analytic map $g: M^{(R)} \to W_\epsilon^{(R)}$ such that 
	\begin{enumerate}
		\item $M = g^{-1}(W_\epsilon)$ is covered by a finite set $M = \cup_\alpha M_\alpha$ where $M_\alpha = [0,b]^d$.
		\item In each $M_\alpha$, 
		\begin{equation}
			K(g(\xi)) = \xi^{2\bfk} = \xi_1^{2k_1} \cdots \xi_d^{2k_d},
			\label{kmono}
		\end{equation}
		where $k_j \in \mathbb N, j=1, \ldots, d$  are such that not all $k_j$ are zero.
		\item There exists a $C^\infty$ function $b(\xi)$ such that 
		\begin{equation}
			\transprior=  \xi^{\bfh} b(\xi) =  \xi_1^{h_1} \cdots \xi_d^{h_d} b(\xi),
			\label{restmono}
		\end{equation}
		where $h_j \in \mathbb N, j=1,\ldots,d$, $\vert g'(\xi)\vert $ is the absolute value of the determinant of the Jacobian and  $b(\xi)>c>0$ for $\xi \in [0,b]^d$.
	\end{enumerate}
	\label{thm:res}
\end{theorem}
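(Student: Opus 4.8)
The plan is to deduce the statement from Hironaka's resolution of singularities, treating that deep result as a cited black box and supplying only the structural checks that make it applicable. Before invoking it, I would first establish the two properties of $K(w)$ that resolution requires: that $K$ is \emph{non-negative} and \emph{real analytic} on $W_\epsilon^{(R)}$. Non-negativity is immediate, since $K(w)$ is a Kullback--Leibler divergence with zero set exactly $W_0$. Analyticity is what the Fundamental Conditions buy: writing the log-density ratio $f(x,y,w)=\log\{p_0(y\vert x)/p(y\vert x,w)\}$, one has $K(w)=\E_{\truedist} f(x,y,w)$, and Condition I supplies analyticity of the integrand in $w$ while Condition II with $s=2$ supplies the integrability needed to differentiate under the integral to all orders and control remainders, so the Taylor series of $K$ about each point of $W_\epsilon^{(R)}$ converges. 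This step is routine but is precisely where the hypotheses are consumed.

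With $K$ a non-negative real analytic function on the (relatively compact) set $W_\epsilon^{(R)}$, I would apply real-analytic resolution of singularities to $K$. This produces a real analytic manifold $M^{(R)}$ and a proper real analytic map $g:M^{(R)}\to W_\epsilon^{(R)}$ that is an analytic isomorphism away from the zero set of $K$ and whose exceptional locus is a normal crossing divisor. Properness is the key topological output: since $W_\epsilon$ is closed in the compact $W$, hence compact, properness forces $M=g^{-1}(W_\epsilon)$ to be compact, and compactness together with the local triviality of a normal crossing configuration yields the finite atlas $M=\cup_\alpha M_\alpha$ with each chart identified with a cube $[0,b]^d$ in which the divisor is $\{\xi_1\cdots\xi_d=0\}$. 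In each such chart resolution furnishes the two normal-crossing expansions $K(g(\xi))=u(\xi)\,\xi^{\mathbf{m}}$ and $\vert g'(\xi)\vert = v(\xi)\,\xi^{\bfh}$, where $u,v$ are analytic units (nowhere zero, hence of constant sign) and $\mathbf{m},\bfh$ are multi-indices of nonnegative integers.

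It then remains to upgrade these ``monomial up to a unit'' forms to the exact statements \eqref{kmono} and \eqref{restmono}. For \eqref{kmono} I would first argue evenness: on $[0,b]^d$ the coordinates are non-negative and $K\ge 0$ while $u$ has constant sign, so $u>0$ and each exponent in $\mathbf{m}$ must be even, say $\mathbf{m}=2\bfk$; an odd exponent would force $K(g(\xi))$ to change sign as the corresponding coordinate crosses its factor. The positive analytic unit $u(\xi)$ is then absorbed by a local analytic change of coordinates, e.g. rescaling one coordinate with $k_j\neq 0$ by the analytic factor $u(\xi)^{1/(2k_j)}$, a diffeomorphism preserving the chart structure, giving $K(g(\xi))=\xi^{2\bfk}$ with not all $k_j$ zero (the latter because $K$ vanishes on the exceptional set). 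For \eqref{restmono}, since $\varphi$ is continuous and bounded below by a positive constant on its support, $\varphi(g(\xi))$ is a positive analytic factor; multiplying it into the unit $v(\xi)$ of the Jacobian expansion gives $\transprior=\xi^{\bfh}b(\xi)$ with $b(\xi)=\varphi(g(\xi))\,v(\xi)>c>0$ on the chart.

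The main obstacle is, honestly, the invocation of resolution of singularities itself: the real-analytic version is a deep theorem, and in an original proof essentially all the substantive content would live there. Granting it, the remaining friction is the bookkeeping of the last paragraph -- checking that the coordinate change absorbing $u$ can be carried out consistently across the finite atlas while keeping $g$ proper and the charts in canonical cube form, and confirming that the parity argument for evenness is not disturbed by that change of variables. Establishing analyticity of $K$ from the Fundamental Conditions, while conceptually clear, is the other place where genuine care with the integrability estimates is needed.
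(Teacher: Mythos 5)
The paper itself supplies no proof of this statement: it is imported verbatim as Theorem 6.5 of Watanabe, and the only indication of its derivation is the remark in Appendix \ref{appendix:assumptions} that the result follows from ``a direct application of Hironaka's resolution of singularities, \emph{simultaneously}, to $K(w)$ and the prior $\varphi(w)$.'' Your overall route --- analyticity of $K$ from the Fundamental Conditions, Hironaka as a black box, properness giving a finite atlas --- matches that intended derivation, but your treatment of the prior has a genuine gap. You resolve only $K$ and then dispose of the prior by asserting that $\varphi(g(\xi))$ is a positive factor ``bounded below by a positive constant on its support.'' That is false under the hypotheses actually in force: the Fundamental Conditions only require $\varphi$ to be semi-analytic, $\varphi = \varphi_0 \varphi_1$ with $\varphi_0$ positive smooth and $\varphi_1$ non-negative analytic, and $\varphi_1$ is allowed to vanish --- at the boundary of the compact support, or on an interior analytic set. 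In that case $\varphi \circ g$ is not bounded below, the exponents $h_j$ in \eqref{restmono} must absorb the zeros of $\varphi_1$ as well as those of the Jacobian determinant, and your construction cannot deliver $b(\xi) > c > 0$. The repair is exactly the simultaneous resolution the paper alludes to: apply Hironaka to put $K$ and $\varphi_1$ (equivalently, their product) into normal crossing form in the \emph{same} coordinates, so that $\xi^{\bfh}$ carries both the Jacobian and the prior contributions and the residual unit $b$ is bounded away from zero.

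A second, smaller defect: your parity argument for $\mathbf{m} = 2\bfk$ is incoherent as stated. On $[0,b]^d$ the coordinates are non-negative, so no coordinate ever ``crosses its factor'' and $\xi^{\mathbf m} \ge 0$ regardless of parity; the sign argument proves nothing there. The evenness must be extracted on full coordinate neighborhoods $(-b,b)^d$ of the resolved manifold, where $K(g(\xi)) \ge 0$ throughout and an odd exponent would force a sign change of the unit-times-monomial expression; only afterwards does one decompose each cube into its $2^d$ orthants (absorbing signs into the coordinates) to obtain the covering by charts $M_\alpha = [0,b]^d$ asserted in item 1. Your unit-absorbing rescaling $\xi_j \mapsto \xi_j\, u(\xi)^{1/(2k_j)}$ should likewise be performed before this orthant split and checked to preserve it. Both defects are repairable, and you are right that essentially all the substantive content lives in Hironaka's theorem --- but as written, item 3 of the statement does not follow from your argument for the class of priors the theorem is meant to cover.
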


In Theorem \ref{thm:res} we have suppressed the dependency on the manifold chart index $\alpha$, but the reader should keep in mind that the maps $g$ and the multi-indices are all indexed by $\alpha$. 
It is also important to recognize that none of these said quantities are unique for a given triplet $\triplet$.

A crucial quantity that appears in SLT is a rational number in $(0,d/2]$ known as the \textbf{real log canonical threshold} (RLCT). Let $ \{ M_\alpha: \alpha\}$ be as in Theorem \ref{thm:res} and define
$$
\lambda_j = \frac{h_j+1}{2k_j},  j=1,\ldots, d
$$
where $h_j$ and $k_j$ are the entries of the multi-indices $\bfh$ and $\bfk$ in a local coordinate $M_\alpha$. When $k_j =0$, $\lambda_j$ is taken to be infinity. 

Uniquely associated to a triplet $\triplet$ are its real log canonical threshold (RLCT) and its multiplicity defined, respectively, as
\begin{equation}
	\lambda = \min_\alpha \min_{j\in{1,\ldots,d}} \lambda_j, \quad m = \max_\alpha \# \{j:  \lambda_j = \lambda \}.
	\label{rlct_m_def}
\end{equation}
Let $\{\alpha^*\}$ be the set of those local coordinates in which both the $\min$ and $\max$ in \eqref{rlct_m_def} are attained. \citet{watanabe_algebraic_2009} calls this set the \textbf{essential coordinates} and the corresponding collection $\{M_\alpha\}$ the \textbf{essential charts}. 

If $\{w: K(w)=0, \varphi(w)>0\}$ is not the empty set, the RLCT of a model-truth-prior triplet is \textit{at most} $d/2$ \citep[Theorem~7.2]{watanabe_algebraic_2009}. 
When the model is regular, the RLCT is \textit{exactly equal} to $d/2$ and the multiplicity $m=1$ \citep[Remark~1.15]{watanabe_algebraic_2009}. In fact, (twice the) RLCT may be regarded as the effective degrees of freedom in strictly singular models \citep{wei_deep_2022}. The RLCT also shows up in important asymptotic results, see \eqref{Bayes_Gn} and \eqref{normFE_exp}.

Henceforth, to make clear that the RLCT and multiplicity are invariants of the model-truth-prior triplet, we shall write $\rlct$ and $\multiplicity$ to mark this dependence. In Appendix \ref{sec:toy}, we recall a simple toy network, a two-parameter $\tanh$ network, where the resolution map, the RLCT, and the multiplicity can be calculated explicitly.

\subsection{Posterior distribution in singular models}
The posterior distribution in strictly singular models is decidedly not Gaussian. The correct asymptotic form can be derived using SLT.
For a particular manifold chart index $\alpha$, let us apply the transformation $g_\alpha(\xi) = w$ and rewrite the posterior distribution in the new coordinate $\xi$,
\begin{equation}
    p(\xi \vert \mathcal D_n) = \frac{\exp(-nK_n(g_\alpha(\xi)) ) \varphi(g_\alpha(\xi)) \vert g_\alpha'(\xi)\vert}{\normev(n)},
    \label{transformed_posterior}
\end{equation}
with 
$$
K_n(w) = \frac{1}{n} \sum_{i=1}^n \log \frac{p_0(y_i \vert x_i)}{p(y_i\vert x_i, w)}
$$
denoting the sample average log likelihood ratio. Note $K_n(w)$ is the empirical counterpart to $K(w)$.

By (cheekily) substituting \eqref{kmono}  \eqref{restmono} into $p(\xi \vert \mathcal D_n)$ in \eqref{transformed_posterior}, we obtain that the posterior distribution for large $n$, in the chart $M_\alpha$, is described as a so-called \textbf{standard form} \cite{watanabe_mathematical_2018}:
$$
\exp(-n \xi_1^{2k_1} \xi_2^{2k_2} \cdots \xi_d^{2k_d} ) | \xi_1^{h_1} \cdots \xi_d^{h_d} | b(\xi).
$$
In other words, the posterior distribution over the parameters of a singular model can be transformed into a mixture of standard forms, asymptotically. 
In Figure \ref{fig:tanh-regression-likelihood-plot} we display the singular posterior density contour plot for a toy 2D $\tanh$-neural network in two settings of the true distribution. 


\section{The Bayes posterior predictive distribution}
\label{sec:Bayes_posterior_predictive}
Let the generalization error of some predictive distribution $\hat p_n(y|x)$, estimated from a training set $\mathcal D_n$, be measured using the KL divergence:
\begin{equation}
G_n(\hat p_n(y|x)) := \kl {p_0(y|x)p(x)}{\hat p_n(y|x) p(x) }
\label{Gn}
\end{equation}
In the machine learning community, this goes by another name: $G_n(\cdot)$ is, up to a constant and a sign flip, the population counterpart to the commonly reported test log-likelihood, aka the predictive log-likelihood or \textbf{test log-predictive density}. This can be seen by writing
\begin{equation}
\hat G_n = -  \frac{1}{n'} \sum_{(x,y)\in \mathcal D_{n'}} \left( \log p_0(y|x) - \log \hat p_n(y|x) \right)
\label{hatGn}
\end{equation}
where $\mathcal D_{n'}$ is an independent dataset.1
According to Theorems 1.2 and 7.2 in \cite{watanabe_algebraic_2009}, we have, for the Bayes posterior predictive distribution \eqref{bayes_ppd}, 
\begin{equation}
	\E G_n(p(y  \vert x,\mathcal D_n)  ) =\rlct /n + o(1/n)
	\label{Bayes_Gn}
\end{equation}
where the expectation is taken with respect to $\mathcal D_n$. We will call the left hand side \eqref{Bayes_Gn} the expected \textbf{Bayes generalization error}. 
This can be contrasted to the expected generalization error of MLE (and similarly of MAP), which Theorem 6.4 of \cite{watanabe_algebraic_2009} shows to be
$
	\E G_n(p(y  \vert x, \hat w_{mle})  ) = S/n + o(1/n)
$
where $S$, the maximum of a Gaussian process, can be much larger than $\rlct$. 
The situation is markedly different for regular models, where differences between the three estimators become negligible in the large-$n$ regime. 

We briefly outline the derivation of \eqref{Bayes_Gn} as it will inform the narrative on the VGE in the next section. First, for the normalized Bayes free energy, under the Fundamental Conditions I and II discussed in \ref{appendix:assumptions}, it was proven in \citep[Main Theorem 6.2]{watanabe_algebraic_2009} that the following asymptotic expansion holds
\begin{equation}
\bar{F}(n) = \rlct \log n + (m-1) \log \log n + O_P(1). 
\label{normFE_exp}
\end{equation}
The result in \eqref{Bayes_Gn} is then proven using the above expansion together with the well known relationship between the Bayes generalization error and the (normalized) Bayes free energy \citep[Theorem 1.2]{watanabe_algebraic_2009}: 
\begin{equation}
    \E G_n(p(y|x,\mathcal D_n)) = \E \bar{F}(n + 1) - \E \bar{F}(n). 
    \label{diff_free_energy}
\end{equation}
where on the right-hand side, the first expectation is with respect to dataset $\mathcal D_{n+1}$ and the second $\mathcal D_n$.
Due to this relationship, the Bayes free energy shares the \textit{same} coefficient as the Bayes generalization error.

\section{A tale of two variational coefficients}
\label{sec:two_var_RLCTs}
Most applications of variational inference in BNNs labor under the following implicit assumptions: 1) optimizers of the variational objective in \eqref{kl_vi_obj} have good induced predictive distributions, and 2) two variational families can be compared according to the performance of their induced predictive distributions. 
A look at the experimental sections of various works on variational BNNs reveal that these assumptions underlie standard practice \cite{blundell_weight_2015, rezende_variational_2015, louizos_structured_2016, louizos_multiplicative_2017, osawa_practical_2019, swiatkowski_k-tied_2020}. We shall see in this section that these two assumptions do not always hold.
 
Let us associate to a variational family $\mathcal Q$ its \textbf{normalized minimum variational free energy (MVFE)}, $$\minnormvfe:= \min_{q \in \mathcal Q} \normvfe.$$
Asymptotics for the MVFE have so far been addressed on a case-by-case basis for certain models and certain variational families, e.g., Gaussian mean-field variational families for reduced rank regression \cite{nakajima_variational_2007}, nonnegative matrix factorization  \cite{kohjima_phase_2017, hayashi_variational_2020}, normal mixture model \cite{watanabe_stochastic_2006}, hidden Markov model \cite{hosino_stochastic_2005}. 
In all the cited instances above, the asymptotic expansion of the \textbf{average normalized MVFE} takes the form
\begin{align}
	\E \minnormvfe&= \lvfe \log n + o(\log n)
\label{vfe_expansion}
\end{align}
where the expectation is taken over datasets $\mathcal D_n$. 
Note that $\lvfe \ge \rlct$ necessarily \cite{nakajima_variational_2007}. Because the \textbf{variational approximation gap},
\begin{equation}
\mathcal G:=\bar{F}_{vb}^*(n) - \bar{F}(n),
\label{var_approx_gap}
\end{equation} 
is the difference of the (normalized) MVFE and the (normalized) Bayes free energy, the gap boils down to the difference between two coefficients:
$$
\mathcal G \approx \left(\lvfe - \rlct \right) \log n.
$$ 
Now, under some natural conditions\footnote{The predictive distribution should be consistent as $n$ goes to infinity, see the discussion in Chapter 13 of \cite{nakajima_variational_2019}}, the VGE admits the asymptotic expansion,
 \begin{equation}
	\E G_n(p_{vb}(y|x,\mathcal D_n)) = \lvge/n + o(1/n).
	\label{vge_expansion}
 \end{equation}
Importantly, $\lvge \ne \lvfe$ in general, e.g.,  \cite{nakajima_variational_2007}. This is in contrast to the Bayesian posterior predictive distribution in \eqref{bayes_ppd}, where the coefficient of the leading $O(1/n)$ term is precisely the RLCT, $\rlct$. That $\lvge \ne \lvfe$ results from the fact that the relationship \eqref{diff_free_energy} is not valid when a variational approximation to the posterior is employed. 

In Figure \ref{fig:three_scenario}, we illustrate the three possible configurations of the coefficients $\rlct, \lvfe, \lvge$ for a given variational family $\mathcal Q$ and a model-truth-prior triplet. When $\lvfe > \lvge$, we call the setting \textbf{favorable} since minimizing the VFE offers control over the VGE. When $\lvfe<\lvge$, we call this \textbf{unfavorable} since achieving even a small variational approximation gap could result in an induced predictive distribution with high generalization error. The distribution of favorable versus unfavorable settings in practice is unclear, as the exact relationship between $\lvfe$ and $\lvge$ has been derived in a limited number of works. The results in \cite{nakajima_variational_2007} on linear neural networks, aka reduced rank regression, show there are both favorable and unfavorable settings depending on the input and output dimension, the number of hidden units, and a rank measurement on the truth. 

Note that even in favorable settings, we must be careful when comparing two variational families $\mathcal Q_1$ and $\mathcal Q_2$. Figure \ref{fig:compareQs_cartoon} illustrates a scenario where the family $\mathcal Q_1$ incurs a smaller variational approximation gap than $\mathcal Q_2$, but the induced predictive distribution of $\mathcal Q_1$ has $\lvge$ higher than that of $\mathcal Q_2$. This shows that comparing different variational approximations by their test log predictive density is fraught with potential misinterpretations. In order to control the downstream predictive performance, it is thus important to find a variational family with a small approximation gap, so that we can inherit (and sometimes even beat!) the predictive advantages of the exact Bayes posterior predictive distribution \eqref{bayes_ppd}, i.e., achieve $\lvge < \rlct$.

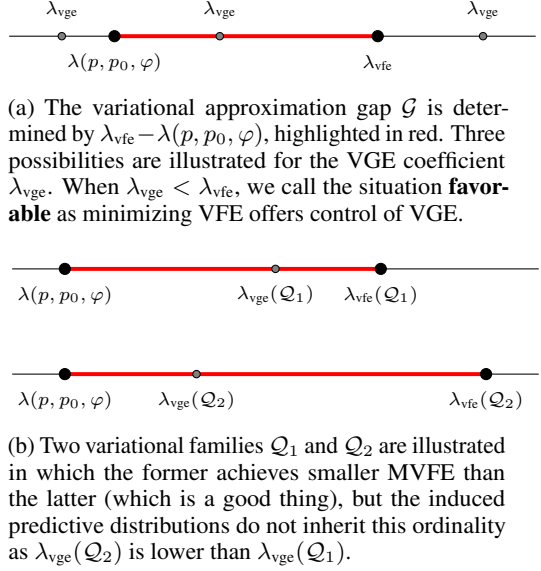
\begin{figure}
\centering
\begin{subfigure}{0.4\textwidth}
    \begin{tikzpicture}[scale=0.7]
    \draw (0, 0) -- (10, 0);
    \draw [line width=1.5pt, red ] (2, 0) -- (7, 0);

    \draw[fill=black] (2, 0) circle (3pt);
    \draw (2, -0.5) node{\scriptsize $\rlct$};
    \draw[fill=black] (7, 0) circle (3pt);
    \draw (7, -0.5) node{\scriptsize $\lvfe$};
    \foreach \i in {1, 4, 9}
    {
        \draw[fill=gray] (\i, 0) circle (2pt);
        \draw (\i, 0.5) node{ \scriptsize $\lvge$};
    }
    \end{tikzpicture}
    \caption{The variational approximation gap $\mathcal G$ is determined by $\lvfe - \rlct$, highlighted in red. Three possibilities are illustrated for the VGE coefficient $\lvge$. When $\lvge < \lvfe$, we call the situation \textbf{favorable} as minimizing VFE offers control of VGE. }
    \label{fig:three_scenario}
\end{subfigure} 
\par\bigskip
\begin{subfigure}{0.4\textwidth}
    \begin{tikzpicture}[scale=0.7]
        \foreach \x/\y/\dx/\i in {9/0/5.5/2, 7/2/2/1}
        {
            \draw (0, \y) -- (10, \y);
            \draw [line width=1.5pt, red ] (1, \y) -- (\x, \y);

            \draw[fill=black] (1, \y) circle (3pt);
            \draw (1, \y -0.5) node{\scriptsize $\rlct$};

            \draw[fill=black] (\x, \y) circle (3pt);
            \draw (\x, \y -0.5) node{\scriptsize $\lvfe(\mathcal{Q}_\i)$};
            \draw[fill=gray] (\x - \dx, \y) circle (2pt);
            \draw (\x - \dx, \y -0.5) node{\scriptsize $\lvge(\mathcal{Q}_\i)$};
        }
    \end{tikzpicture}
    \caption{Two variational families $\mathcal Q_1$ and $\mathcal Q_2$ are illustrated in which the former achieves smaller MVFE than the latter (which is a good thing), but the induced predictive distributions do not inherit this ordinality as $\lvge(\mathcal Q_2)$ is lower than $\lvge(\mathcal Q_1)$.}
    \label{fig:compareQs_cartoon}
\end{subfigure}
\caption{We show in these schematics that evaluating variational approximations to BNNs according to their induced predictive distribution is fraught with potential misinterpretations. }
\label{fig:cartoon}
\end{figure}

\section{Related work}
Although the perspective on offer here -- that the discrepancy between test log predictive density and the variational objective \textit{amounts to the relationship between two variational coefficients} -- is novel, we are not the first to point out this general phenomenon in variational inference \cite{yao_yes_2018, huggins_validated_2020, deshpande_are_2022, dhaka_robust_2020}. 
This phenomenon is also documented in the specific setting of variational inference for BNNs \cite{heek_well-calibrated_2018, yao_quality_2019, krishnan_improving_2020, foong_expressiveness_2020}. For instance, \citet{foong_expressiveness_2020} demonstrated in experiments that optimizing the ELBO may not lead to accurate predictive means or variances. 

Another area of active research in variational BNNs is the design of the variational family itself. For the large part, the mean-field family of fully factorized Gaussian distributions is still predominant in the general practice of variational inference \citep{graves_practical_2011, blundell_weight_2015, hernandez-lobato_black-box_2016,NIPS2016_7750ca35, khan2018fast, sun_functional_2019}. The mean-field assumption is mostly adopted for computational ease, though the limitations are well known \citep{mackay_practical_1992, coker_wide_2022}. Moving beyond mean-field Gaussian, we can find works that make use of more realistic covariance structures \citep{louizos_structured_2016, zhang_noisy_2018} or more expressive approximating families, e.g., via normalizing flows \citep{louizos_multiplicative_2017, papamakarios_normalizing_2021}.  

Finally, we note there have been a few recent works that recognize the non-identifiability of deep learning models \cite{moore_symmetrized_2016, pourzanjani_improving_2017, kurle_detrimental_2022}. These works however seem to treat the non-identifiability as an issue to be fixed.

\section{Methodology}
To achieve a good variational approximation, conventional wisdom says to make $\mathcal Q$ as ``expressive" as possible. We will approach the design of the variational family in a more principled manner using SLT. 
To this end, we rely on recent work in \citet{bhattacharya_evidence_2020} which leveraged SLT to produce an \textit{idealized} variational family as follows. Let $\baseQ$ be a family consisting of generalized gamma distributions in $\mathbb R^d$:
\begin{align}
	&\baseQ = \{\qnot(\xi|\varparams)  = \prod_{j=1}^d \qnotj(\xi_j | \lambda_j, k_j, \beta_j) \} 
	\label{baseQ}
\end{align}
where 
$$
\qnotj(\xi_j  | \lambda_j, k_j, \beta_j) \propto \xi_j^{2\qkj \qlj - 1} \exp(-\beta_j \xi_j^{2\qkj}) 1_{[0,1]}(\xi_j)
$$ 
for $\bfl \in \mathbb R_{>0}^d, \bfk \in \mathbb R_{>0}^d, \bfbeta \in (0,\infty)^d$.
\textbf{Henceforth, let $g:=g_\alpha$ where $\alpha$ is such that $M_\alpha$ is an essential chart.} In other words, we are fixing a resolution map $g$, working in a fixed essential chart domain, and a coordinate $\xi$ on that domain that makes $K(g(\xi))$ a monomial as a function from $\R^d \to \R^d$. 
The idealized variational family of \citet{bhattacharya_evidence_2020} is given as the pushforward of base distributions $q_0 \in \baseQ$ by said map $g$:
\begin{equation}
	\mathcal Q = \{ g \sharp \qnot: \qnot \in \baseQ \}.
	\label{idealizedQ}
\end{equation}
We refer to this as an \textit{idealized} variational family for the simple fact that the resolution map $g$, though its existence is guaranteed, is almost never tractable except in the simplest model-truth-prior triplets. Also note that although the family $\baseQ$ is mean-field, \eqref{idealizedQ} is \textit{not}.

To study the variational approximation gap incurred by the idealized family \eqref{idealizedQ}, we will first introduce some definitions to help us rewrite the gap $\mathcal G$ in notation that is consistent with \citet{bhattacharya_evidence_2020}.
Define
\begin{equation}
	\PsiKn = - \E_{\qnot}  n K_n(\gstar(\xi))  - \kl {\qnot(\xi) }{ \transprior }
	\label{PsiKn}
\end{equation}
See Appendix \ref{appendix:rewrite_G} for the derivation that the variational approximation gap in \eqref{var_approx_gap} is equivalent to
\begin{equation}
\mathcal G = \log \normev(n) - \sup_{\qnot \in \baseQ} \PsiKn.
\label{gap_G}
\end{equation}
Following \citet{bhattacharya_evidence_2020}, we consider the deterministic approximation gap corresponding to \eqref{gap_G}. This is accomplished by replacing $K_n$ with $K$, leading to 
 \begin{equation}
 	\PsiK := - \E_{\qnot}  n K(\gstar(\xi))  - \kl{\qnot(\xi)}{\transprior}.
 	\label{PsiK}
 \end{equation} 
and 
$$ 	
\detnormev(n) := \int_W e^{ - n K(w) } \varphi(w)  \,dw.
$$
For our theoretical investigation, we shall concern ourselves with the \textit{deterministic} variational approximation gap, 
\begin{equation}
\mathcal G_K := \log \detnormev(n) - \sup_{\qnot \in \baseQ} \PsiK.
\label{gap_GK}
\end{equation}
Techniques for generalizing the main result Theorem \ref{thm:deterministic_gap} which concerns $\mathcal G_K$ to the stochastic world can be found in \citet[Section 5.3.3]{plummer_statistical_2021}.

We will appeal to large-$n$ asymptotics to study the behavior of \eqref{gap_GK}. Note that the study and deployment of BNNs is no stranger to large-$n$ asymptotics, both in early \cite{mackay_practical_1992} and recent \cite{ritter_scalable_2018} works. We proceed under this tradition, but deviate from the crude (and incorrect) Laplace approximation that is often employed and instead use the correct asymptotics provided by SLT.

\subsection{Model evidence in singular models}
\label{sec:expandZn}
To study the gap in \eqref{gap_GK}, we begin by examining the asymptotic behavior of $\detnormev(n)$.
When the model is regular, we need not bother with SLT and may find to leading order,
$
\detnormev(n) = \varphi(w_0) \sqrt{ \frac{(2\pi)^d}{\det H(w_0)} }  n^{-d/2}
$ 
via the Laplace approximation. This approximation, however, is egregiously inappropriate for strictly singular models, in particular neural networks \cite{wei_deep_2022}. Nonetheless, perhaps due to a sense that no tractable alternatives exist, the Laplace approximation is seeing a resurgence of application in Bayesian deep learning \cite{ritter_scalable_2018, immer_scalable_2021}. 

For strictly singular models, the quantities $Z(n), \normev(n)$ and $\detnormev(n)$ manifest as singular integrals, i.e., integrals of the form
$
\int_W e^{-nf(w)} \varphi(w) \,dw
$
where $W \subset \mathbb R^d$ is a compact semi-analytic subset, and $f$ and $\varphi$ are real analytic functions. The behavior of a singular integral depends critically on the zeros of $f$.
According to Theorem~6.7 in \cite{watanabe_algebraic_2009}, we find to leading order:
\begin{equation}
	\detnormev(n) = C\triplet n^{-\rlct} (\log n)^{\multiplicity - 1},
	\label{leading_detnormev} 
\end{equation}
where $C\triplet$ is a constant independent of $n$ that we shall call the \textbf{leading coefficient} following the terminology of \cite{lin_algebraic_2011}. 
Note that since $\rlct=d/2$ and $\multiplicity = 1$ in regular models, \eqref{leading_detnormev} is a true generalization of the Laplace approximation, holding for both regular and strictly singular models.

\subsection{Bounding $\mathcal G_K$}
We show in Lemma \ref{lemma:PsiK_lb} in Appendix \ref{appendix:lemmas}, that for large $n$, the following bound holds
\begin{align}
\sup_{\qnot \in \baseQ} \PsiK &\ge - \rlct \log n + C
\label{PsiK_lb}
\end{align}
where $C$ is the constant free of $n$ in Lemma \ref{lemma:PsiK_lb}.
This result is in the same spirit as \citep[Theorem 3.1]{bhattacharya_evidence_2020}, except that we have improved on the tightness of their lower bound, which in turn allows us to devise better initialization of the variational parameters. With Lemma \ref{lemma:PsiK_lb}, we are now in a position to characterize the (deterministic) variational approximation gap, $\mathcal G_K$. 

\begin{theorem}[Deterministic variational approximation gap]
Suppose the model-truth-prior triplet $\triplet$ is such that Theorem \ref{thm:res} holds. Let $g = g_\alpha$ where $\alpha$ is such that $M_\alpha$ is an essential chart. On this essential chart, write the local RLCTs $\truelj = \frac{\truehj+1}{2\truekj},  j=1,\ldots, d$ in descending order so that $\tilde \lambda_1$ is the RLCT of the triplet $\triplet$, i.e., $\tilde \lambda_1 = \rlct$. If the multiplicity of the triplet is 1, we have, for $n$ large,
$
    \mathcal G_K \le  \log C\triplet - C + o(1),
$
where the constant $C$ is as given in Lemma \ref{lemma:PsiK_lb}.
\label{thm:deterministic_gap}
\end{theorem}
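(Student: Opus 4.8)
The plan is to read the theorem off directly from the two asymptotic ingredients already assembled, since the genuinely hard analysis has been packaged into Lemma \ref{lemma:PsiK_lb} and the Watanabe-type expansion \eqref{leading_detnormev}. First I would return to the definition \eqref{gap_GK}, $\mathcal G_K = \log \detnormev(n) - \sup_{\qnot \in \baseQ} \PsiK$, so that controlling $\mathcal G_K$ from above amounts to combining an upper bound on $\log \detnormev(n)$ with a lower bound on $\sup_{\qnot \in \baseQ} \PsiK$.

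For the first term I would invoke \eqref{leading_detnormev} under the multiplicity-$1$ hypothesis. Since $\multiplicity = 1$, the factor $(\log n)^{\multiplicity - 1}$ collapses to unity, leaving $\detnormev(n) = C\triplet\, n^{-\rlct}(1 + o(1))$ to leading order. Taking logarithms yields $\log \detnormev(n) = \log C\triplet - \rlct \log n + o(1)$, where $\tilde\lambda_1 = \rlct$ in the notation of the statement.

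For the second term I would apply the bound \eqref{PsiK_lb} of Lemma \ref{lemma:PsiK_lb}, namely $\sup_{\qnot \in \baseQ} \PsiK \ge -\rlct \log n + C$ for $n$ large, with $C$ the $n$-free constant of that lemma. Subtracting this lower bound from the upper bound on $\log \detnormev(n)$, the leading $-\rlct \log n$ terms cancel exactly --- which is the whole point of matching the idealized family to the resolution map --- and I am left with $\mathcal G_K \le \log C\triplet - C + o(1)$, as claimed.

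The only substantive obstacle lives inside Lemma \ref{lemma:PsiK_lb}, whose proof must exhibit a concrete generalized gamma base distribution $\qnot \in \baseQ$ (built from the ordered local exponents $\truelj$, with the mass concentrated on the coordinate realizing $\tilde\lambda_1$) achieving the $-\rlct \log n$ rate with the sharp constant $C$; granting that lemma, the present theorem is a one-line cancellation. The two subtleties I would check are (i) that the multiplicity-$1$ assumption genuinely annihilates the $(\log n)^{\multiplicity-1}$ contribution, so that no residual $\log\log n$ term survives from \eqref{normFE_exp}-type corrections; and (ii) that the $o(1)$ remainder from \eqref{leading_detnormev} and the ``$n$ large'' qualifier in \eqref{PsiK_lb} merge into a single $o(1)$ without absorbing any part of the constant $\log C\triplet - C$.
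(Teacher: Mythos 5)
Your proposal is correct and matches the paper's own argument exactly: the paper likewise proves Theorem \ref{thm:deterministic_gap} by combining the lower bound $\sup_{\qnot \in \baseQ} \PsiK \ge -\rlct \log n + C$ from Lemma \ref{lemma:PsiK_lb} with the expansion \eqref{leading_detnormev}, where multiplicity $1$ kills the $(\log n)^{\multiplicity-1}$ factor, so the $-\rlct\log n$ terms cancel upon taking logarithms. Your two flagged subtleties are handled implicitly in the paper in the same way you describe, so nothing further is needed.
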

All that is needed for the proof of Theorem \ref{thm:deterministic_gap} is to put together the lower bound in Lemma  \ref{lemma:PsiK_lb} with the fact that $\detnormev(n)$ admits the asymptotic expansion in \eqref{leading_detnormev}.  
Even when $\multiplicity \ne 1$, there may be finite $n$ situations when the two terms $(\multiplicity-1) \log \log n$ and $ \log C\triplet - C$ are comparable. In such settings, the idealized variational family $\mathcal Q$ in \eqref{idealizedQ} could still perform well.


\subsection{Learning to desingularize}
\label{sec:learng}
In the preceding section, we studied the deterministic variational approximation gap of an idealized variational family. Although Hironaka proved the existence of a resolution map and showed that it can be found by recursive blow up, known algorithms for finding such resolutions, other than a few exceptional cases (such as those for toric resolutions), have complexity that vastly exceed existing computational capabilities. Thus we are precluded from directly applying the idealized variational family.

This leads us to consider \textit{learning} the resolution map $\gstar$ using an invertible architecture $G_\theta$ resulting in the variational family
\begin{align}
\mathcal{\hat Q} = \{ G_\theta \sharp \qnot(\varparams): \bfbeta=(n,\beta_2,\ldots, \beta_d)\}.
\label{practicalQ}
\end{align}
If the network is expressive enough, we can hope that $g \in \{G_\theta: \theta\}$, which would lead $\mathcal{\hat Q}$ to enjoy the theoretical guarantee provided in Theorem \ref{thm:deterministic_gap}.
Note in \eqref{practicalQ} the first coordinate of $\bfbeta$ has been set to the sample size $n$. The proof of Lemma \ref{lemma:PsiK_lb} reveals why we do so. Specifically, it is shown that the following parameters in $\qnot$ can achieve $\Psi(\qnot) = -\rlct \log n + C$: 
$$
\lambda_1 = \rlct,  \quad k_1 = \tilde k_1, \quad \beta_1 =n
$$ 
where $\tilde k_1$ is as in Theorem \ref{thm:deterministic_gap}. Note that $\rlct$ and $\tilde k_1$ are unknown, but $n$ is certainly known.

It might be readily apparent at this point that we have in $\mathcal{\hat Q}$ a standard normalizing flow, albeit with the base distribution given by the generalized gamma distribution. 
To ease the computational cost, we fix the variational parameters $\bfl, \bfk, \betarest$ and absorb the learning of their optimal values into the invertible transformation $G_\theta$. Note that this is in line with standard practice, whereby normalizing flows adopt parameter-less base distributions. 

To summarize, recognizing that the variational approximation gap can be theoretically studied using SLT allowed for the design of a principled variational family which incurs a variational approximation gap that is independent of sample size $n$, to leading order. To the best of our knowledge, no existing works on normalizing flows for BNNs theoretically address the variational approximation gap. Furthermore, our results offer a new perspective on the benefits of using normalizing flows for variational inference in BNNs.

\section{Experiments}
\label{sec:experiments}

In the following set of experiments\footnote{The code to reproduce our results is available at \href{https://github.com/suswei/BNN_via_SLT}{https://github.com/suswei/BNN\_via\_SLT}.}, we will isolate and examine the effect of the base distribution. 
Specifically, we compare the \textit{generalized gamma base distribution} to the commonly-adopted \textit{Gaussian base distribution}, holding the architecture of $G_\theta$ fixed when we do so. At the outset, we expect that when $G_\theta$ is expressive enough, the effect of the base distribution will be small. However, when $G_\theta$ is more limited (and thus less computationally expensive), we conjecture the generalized gamma base distribution can ``pick up the slack" and outperform the Gaussian base distribution. 

\begin{table}[h]
    \caption{The various model-truth-prior triplets considered in experiments. The truth is realizable. The prior over network weights is standard Gaussian. The RLCT is only known in some of the cases.}
    \vskip 0.1in
    \label{table:triplet_summary}
    \centering
    \scalebox{0.6}{
    \begin{tabular}{llrlrr}
\toprule
model & $H$  &  $\mathrm{dim}_w$ & $\rlct$ &  $\mathrm{dim}_x$ &  $\mathrm{dim}_y$ \\
\midrule
ffrelu & 3   &       42 &         - &      13 &       1 \\
                 & 7   &       98 &         - &      13 &       1 \\
                 & 16  &      224 &         - &      13 &       1 \\
                 & 40  &      560 &         - &      13 &       1 \\
reducedrank & 2   &       14 &       5.0 &       5 &       2 \\
                 & 7   &      119 &      35.0 &      10 &       7 \\
                 & 10  &      230 &      65.0 &      13 &      10 \\
                 & 16  &      560 &     152.0 &      19 &      16 \\
tanh & 15  &       30 &         - &       1 &       1 \\
                 & 50  &      100 &         - &       1 &       1 \\
                 & 115 &      230 &         - &       1 &       1 \\
                 & 280 &      560 &         - &       1 &       1 \\
tanh (zero mean) & 15  &       30 &      1.93 &       1 &       1 \\
                 & 50  &      100 &      3.53 &       1 &       1 \\
                 & 115 &      230 &      5.36 &       1 &       1 \\
                 & 280 &      560 &      8.36 &       1 &       1 \\
\bottomrule
\end{tabular}
    }
    \vskip -0in
\end{table}

In line with our earlier discussion, the parameters of the base distributions are frozen throughout training, see Appendix \ref{appendix:experiment} for the initialization used.
The invertible network $G_\theta$ is implemented as a sequence of affine coupling transformations.
We denote by \texttt{base\_numcouplingpairs\_numhidden} the variational family that results from pushing forward the base distribution through $G_\theta$ with the said configuration, see Appendix \ref{appendix:experiment} for a complete description of the implementation. We consider a total of four different expressivity levels of $G_\theta$ from least to most: \texttt{2\_4}, \texttt{2\_16}, \texttt{4\_4}, \texttt{4\_16}.

The expression for the ELBO objective corresponding to each of the base distributions is given in \eqref{elbo_gengamma} and \eqref{elbo_gaussian} in Appendix \ref{appendix:experiment}. Details of the training procedure such as epochs, learning rate, and optimizer are also given there.
Let $\hat q^*$ be the variational distribution obtained at the end of training. 
Comparison of the base distributions, and hence the two different normalizing flows, will be made according to normalized MVFE, $\minnormvfe$, and VGE, $G_n(p_{vb}(y|x,\mathcal D_n)$. (For both, the lower, the better.) 
We will also estimate the coefficients $\lvfe$ in \eqref{vfe_expansion} and $\lvge$ in \eqref{vge_expansion}, see Appendix \ref{appendix:experiment}.

We consider four model-truth-prior triplets, summarized in Table \ref{table:triplet_summary}, in which the truth is always realizable. In all four triplets, the prior over the neural network weights is chosen to be the standard Gaussian following conventional practice in BNNs \cite{neal_bayesian_1996,bishop_pattern_2006}.  Note, priors for BNNs are notoriously difficult to design and is an area under active research \cite{sun_functional_2019, nalisnick_predictive_2021}.

\begin{figure}[t]
\centering
\includegraphics[width=0.8\textwidth]{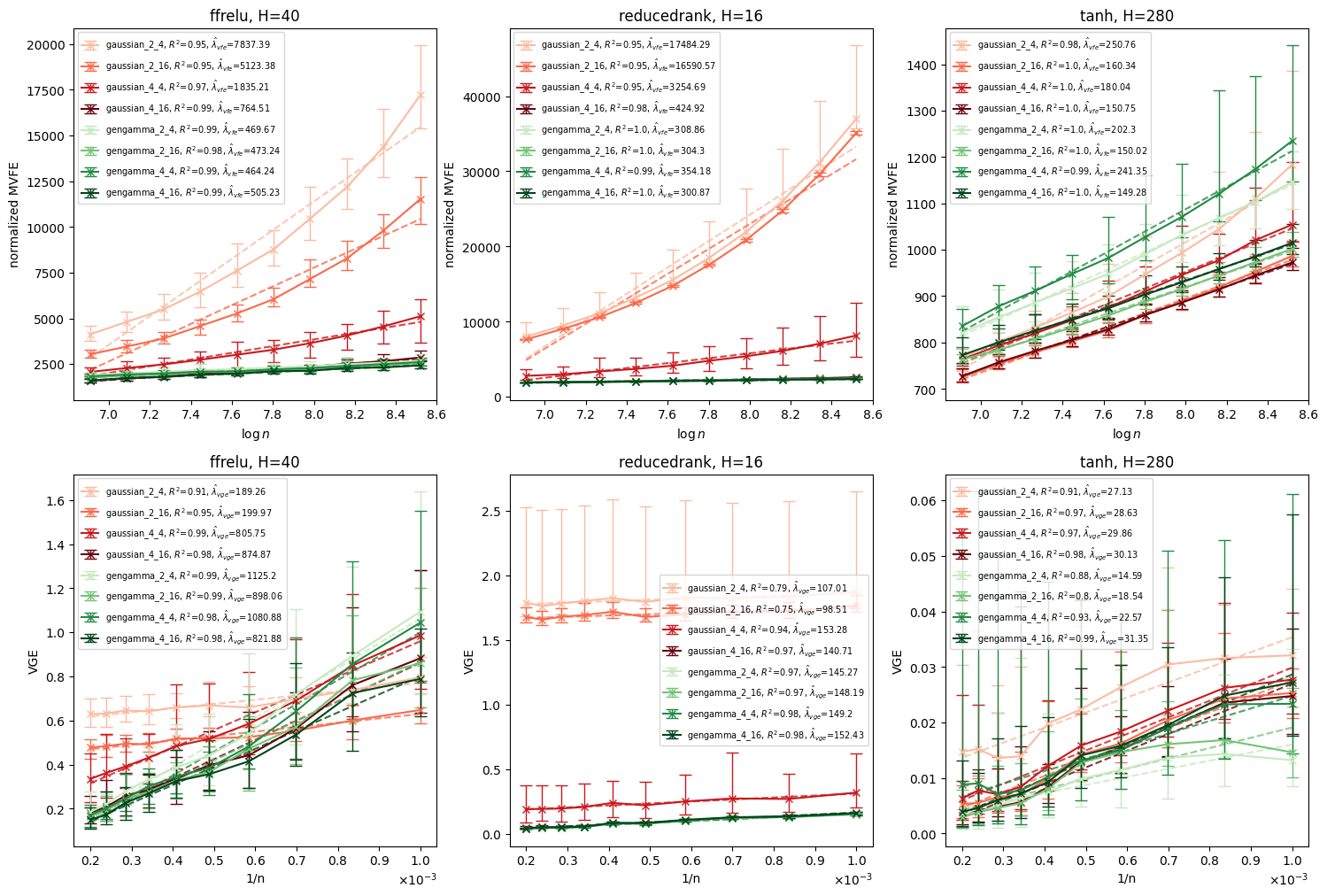}
  \caption{MVFE versus $\log n$ is displayed in the first column and VGE versus $1/n$ is displayed in the second. Each row corresponds to a different model-truth-prior triplet. Line color indicates the expressiveness of the network $G_\theta$, darker being more expressive. Error bars represent mean, min and max over 30 draws of the training set $\mathcal D_n$. The dashed line is the least squares fit with $\lvfe$ and $\lvge$ coefficients and their $R^2$ values displayed in legend.}
  \label{fig:main_text_results}
  \vskip -0.2in
\end{figure}
\subsection{Results}
Due to space constraints, we only show a subset of the results in Figure \ref{fig:main_text_results}; complete results can be found in Appendix \ref{appendix:experiment}. In the first column of Figure \ref{fig:main_text_results}, we plot $\log n$ versus the normalized MVFE. 
First, we observe that when $G_\theta$ is not very expressive, the generalized gamma resoundingly outperforms the Gaussian base distribution for the reduced rank and ReLU experiments across all values of $H$ in terms of achieving lower MVFE. (This can be better seen in Figure \ref{MVFE_vs_logn_gengamma_gaussian_2_4_only} in Appendix \ref{appendix:experiment}.) On the other hand, as conjectured, when $G_\theta$ is most expressive at the \texttt{4\_16} configuration, the distinction in MVFE between the base distributions is still discernible but less dramatic, see Figure \ref{MVFE_vs_logn_gengamma_gaussian_best_line_only}. Interestingly, for the $\tanh$ triplet, the Gaussian base distribution sometimes achieves lower MVFE depending on the configuration of $G_\theta$.

In the second column of Figure \ref{fig:main_text_results}, we plot $1/n$ versus the VGE. The results empirically verify the issues we highlighted in Section \ref{sec:two_var_RLCTs}. In terms of VGE, the generalized gamma is not uniformly better than the Gaussian base distribution for the ReLU experiment, contrary to what the corresponding MVFE plots suggest. Only for the reduced rank experiment do we see one-to-one correspondence between MVFE and VGE. Note that the VGE fit is particularly poor for the Gaussian \texttt{2\_4} and \texttt{2\_16} configurations because these variational approximations are themselves poor. Next, note the scenario in Figure \ref{fig:compareQs_cartoon} is borne out by some of the $\tanh$ experiments. Take for instance $\tanh$ at $H=115$ for the \texttt{2\_4} configuration. Judging by MVFE alone the generalized gamma base is worse than Gaussian base, but the corresponding VGE curves show the opposite, see (3,3) subplot in Figures \ref{MVFE_vs_logn_gengamma_gaussian_2_4_only} and \ref{VGE_vs_inverse_n_gengamma_gaussian_2_4_only}.

\section{Discussion}
We conclude by discussing some of the limitations of the current work. On the empirical front, the reader may have noticed that our experiments did not involve truly deep BNNs.  Strictly speaking this is not a limitation of the proposed method but rather a limitation of the scalability of normalizing flows for approximating deep BNNs. We expect the proposed methodology to benefit from orthogonal research advances in normalizing flow architectures.

On the theoretical side, it may be of interest to flush out the magnitude of $\log C\triplet - C$ in Theorem \ref{thm:deterministic_gap}. The general expression for $C\triplet$, although known in special cases \citep[Corollary 5.9]{lin_algebraic_2011}, has complex dependency on $K(w)$ and the prior. However, we do expect that the leading coefficient can be bounded with some effort. Relatedly, it is important to recognize that Theorem \ref{thm:deterministic_gap} only concerns the variational approximation gap of the idealized family in \eqref{idealizedQ}. Deriving an analogous result for the Gaussian base distribution would make for interesting future work.

We are optimistic that natural conditions on the model-truth-prior triplet and the variational family should allow for general statements about MVFE asymptotic expansions. Further efforts into studying the asymptotics of the MVFE will also advance knowledge of the relationship between $\lvfe$ and $\lvge$. In its place, our results here show that it is all the more important to pay attention to the variational approximation gap if we wish to have useful downstream predictions.

\section*{Acknowledgements}
We thank Daniel Murfet for helpful discussions. SW was supported by the ARC Discovery Early Career Researcher Award (DE200101253). This material is also based on work that is partially funded by an unrestricted gift from Google.

\newpage
\bibliography{references}
\bibliographystyle{unsrtnat}

\newpage
\appendix
\onecolumn
\section{SLT assumptions}\label{appendix:assumptions}

Conventional learning theory studies parametric statistical models under the assumption that they satisfy certain regularity conditions. Unfortunately, most models employed in modern machine learning lack such regularity and exhibit behavior that is unaccounted for by conventional learning theory. The core observation of singular learning theory is that \textbf{singularities} of unidentifiable models have drastic impact on learning behavior. In \cite{watanabe_algebraic_2009} and \cite{watanabe_mathematical_2018}, Watanabe carried out a rigorous investigation into singular statistical models from the Bayesian perspective, culminating in several cornerstone results including Theorem \ref{thm:res} and those described in Section \ref{sec:Bayes_posterior_predictive}. 

Fundamental Conditions I and II given in Definitions 6.1 and 6.3 of \cite{watanabe_algebraic_2009}, respectively, are a set of blanket conditions that Watanabe uses throughout the development of SLT; some components of these conditions are not actually relevant to the results we cite in this paper. Below, we simply present the parts of Fundamental Conditions I and II that are relevant to the SLT results we care about in this paper. 
\begin{enumerate}
    \item The model has compact parameter space $W \subset \R^d$ defined by real analytic inequalities. 

    \item The parameter space $W$ is equipped with a prior distribution with semi-analytic density, i.e. the prior density can be expressed as $\varphi(w) = \varphi_0(w) \varphi_1(w)$ with $\varphi_0$ a positive smooth function and $\varphi_1$ a non-negative analytic function. 
    
    \item For all $w \in W$, $p(x|w)$ has the same support as the truth $p_0(x)$\footnote{In the main text we work with the ``supervised" setting and model the joint distribution $p(x, y | w) = p(x)p(y | x, w)$. Here, for easier exposition, we limit the discussion to the ``unsupervised" setting $p(x | w)$. } 

    \item The true distribution $p_0(x)$ is realisable by the model $p(x | w)$. In other words, there exist a parameter $w_0 \in W$, such that $p_0(x) = p(x | w_0)$. 
    
    \item The log-likelihood ratio function $f(x, w) := \log \frac{p_0(x)} {p(x | w)}$ can be extended to a complex analytic function $W_\C \ni w \mapsto f(\cdot, w)$, taking value in the $L^s(p_0)$ with $s=2$, i.e., the space of functions that are square integrable with respect to the true measure $p_0$. 
\end{enumerate}


\paragraph{On compactness}
We require the parameter space $W$ to be compact in Assumption 1. This is not required when the set of true parameters $W_0 = \{w : K(w) = 0\}$ is contained within a relatively compact neighborhood as contributions of parameters far from $W_0$ drops of exponentially. Even in the case where $W_0$ is not compact, we could consider compactification of $\overline{\R}^d \simeq \R^d \cup \{ |w| = \infty \}$, but we will need to ensure that $f(x, w)$ extends to an analytic function in the neighborhood of infinity. 
In practical implementation however, it is common to have compact $W$ due to machine implementation constraints.

\paragraph{On realizability}
Assumption 4 above required that the zero set of $K(w)$ be non-empty. Let's discuss how to deal with violations of this assumption. In unrealisable cases, we can still derive many SLT results by replacing $K(w)$ with $K(w) - K(w_0)$ where $w_0$ is any parameter that achieves the minimum of $K$ and replacing $f(x,w)$ in Assumption 5 with  $f(x, w) = \log\frac{p(x | w_0)}{p(x | w)}$. Then we can smoothly proceed with the theory in the usual manner by resolving singularities of $K(w) - K(w_0)$ in a neighbourhood of the optimal parameter set $W_0 = \{w : K(w) - K(w_0) = 0\}$, \textit{if} we make an additional assumption known as the \textbf{renormalisability} condition \citep{watanabe_asymptotic_2010}. Without renormalisability, we can still proceed but with considerably more difficult technical challenges \cite{watanabe_asymptotic_2010, nagayasu_asymptotic_2022}. 


\paragraph{On analyticity and integrability conditions} 
The result in Theorem \ref{thm:res} is obtained through a direct application of Hironaka's resolution of singularities, simultaneously, to $K(w) = \int p_0(x) f(x, w) dx$ and the prior $\varphi(w)$. It only requires that the zero set of $K(w)$ is non-empty and both functions are analytic on an open neighbourhood of the zero set. The requirements can be further relaxed to have $K(w)$ and $\varphi(w)$ being semi-analytic and the resolution theorem applied to their analytic factors. 
The analyticity condition on $f(x, w)$ in Assumption 5 is usually sufficient to ensure analyticity on $K(w)$. Application of a resolution map $g(\xi) = w$ for $K(w)$, together with integrability conditions for $f(x, w)$ (Assumption 5) results in the discovery of the connection between geometry $W_0$ with free energy asymptotics \ref{normFE_exp} and \ref{Bayes_Gn} via the RLCT. 



It should be noted, however, that even in cases where $f(x, w)$ is non-analytic, the model might still be ameanable to the same treatment if an equivalent analytic representation can be found. For instance, \citep[Section 7.8]{watanabe_algebraic_2009} shows how the non-analytic $f(x, w)$ for normal mixture models can be analysed in SLT.


\section{Toy example of RLCT calculation}
\label{sec:toy}
We recall Example 27 from \cite{watanabe_mathematical_2018} to illustrate the concepts of resolution map, RLCT and multiplicity for a simple model-truth-prior triplet. For univariate input $x \in [0,1]$ and univariate output $ y \in \mathbb R$, consider the model with parameter $w = (a, b) \in [0,1]^2$ given by
\begin{equation}
p(x,y|w) = \frac{1}{\sqrt{2 \pi}} \exp( -\frac{1}{2} ( y - a \tanh( b x))^2 ).
\label{2dtanh}
\end{equation}
Suppose the prior is uniform, i.e., $\varphi(w) = 1$ and the truth is given by $p_0(x,y) = p(x,y | 0, 0 )$. Then we can easily see that 
$$
K(w) =b^2 a^2 \frac{1}{2} K_0(w), 
$$
where
$$
K_0( w) = \int_0^1 \left( \frac{  \tanh(b x)}{ b } \right)^2 \,dx.
$$
The following desingularization map puts the triplet in standard form:
\begin{align*}
\xi_1 &= \sqrt{	\frac{K_0(w)}{2}	} a \\
\xi_2 &= b.
\end{align*}
Next, we have $\varphi(g(\xi)) = \xi^\bfh$ where $h=(0,0$) and $b(\xi) = |g'(\xi)|$. 
Since $(k_1,k_2) = (1,1)$ and $(h_1,h_2) = (0,0)$ we have $(\lambda_1, \lambda_2) = (1/2, 1/2)$. Therefore for this particular model-truth-prior triplet, the RLCT is $1/2$ with multiplicity 2. 

We should note that, to date, there is a rather small collection of strictly singular model-truth-prior triplets where the RLCT and multiplicity are known.

\section{Rewriting the variational approximation gap}
\label{appendix:rewrite_G}
Recall the posterior distribution in the new coordinate $\xi$ in \eqref{transformed_posterior}. For $q$ in \eqref{idealizedQ}, we have
\begin{align*}
	&\kl {q(w)}{ p(w|\mathcal D_n) } \\
    &= \kl {\qnot(\xi) }{  p(\xi \vert \mathcal D_n) } \\
	&= \E_{\qnot}  n K_n(\gstar(\xi)) + \kl {\qnot(\xi)}{\transprior} + \log \normev(n). 
\end{align*} 
Following the notation in \citet{bhattacharya_evidence_2020}, we defined
\begin{equation*}
	\PsiKn = - \E_{\qnot}  n K_n(\gstar(\xi))  - \kl {\qnot(\xi) }{ \transprior }.
\end{equation*}
As long as the support of $\qnot(\xi)$ is contained in the support of the posterior $p(\xi \vert \mathcal D_n)$, we have $\kl {\qnot(\xi) }{  p(\xi \vert \mathcal D_n)} \ge 0 $, leading to the lower bound
$$\PsiKn \le  \log \normev(n).$$	
Equality is achieved if and only if $\qnot(\xi) = p(\xi\vert \mathcal D_n)$.

\section{Lemmas and proofs}
\label{appendix:lemmas}

\begin{lemma}
Suppose the model-truth-prior triplet $\triplet$ is such that Theorem \ref{thm:res} holds. Let $g = g_\alpha$ where $\alpha$ is such that $M_\alpha$ is an essential chart. On this essential chart, write the local RLCTs 
$$
\truelj=\frac{\truehj+1}{2\truekj},  j=1,\ldots, d
$$
in descending order so that $\tilde \lambda_1$ is the RLCT of the triplet $\triplet$, i.e., $\tilde \lambda_1 = \rlct$.
Let $\baseQ$ be as in \eqref{baseQ}. For $q_0 \in \baseQ$, we have
$$
\PsiK = -E_1 -E_2 + E_3 + E_4
$$
with the individual terms $E_1, \ldots, E_4$ given below in the body of the proof. 
\label{lemma:PsiKterms}
\end{lemma}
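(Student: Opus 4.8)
The plan is to carry out a direct computation: I would expand the definition of $\PsiK$ in \eqref{PsiK} using the standard-form expressions supplied by Theorem \ref{thm:res} together with the explicit product form of the base distribution \eqref{baseQ}. First I would split $\PsiK$ into its two constituents, the expected energy term $\E_{\qnot}\, n K(\gstar(\xi))$ and the (generalized) relative-entropy term $\kl{\qnot(\xi)}{\transprior}$. Into the first I would substitute the monomial $K(\gstar(\xi)) = \xi^{2\truek}$ from \eqref{kmono}, writing the resolution exponents as $\truekj$ to distinguish them from the variational exponents $k_j$, and into the second the factorization $\transprior = \xi^{\trueh} b(\xi)$ from \eqref{restmono}.

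Next I would unfold the relative entropy as
\begin{equation*}
\kl{\qnot(\xi)}{\transprior} = \E_{\qnot}[\log \qnot(\xi)] - \sum_{j=1}^d \truehj\, \E_{\qnot}[\log \xi_j] - \E_{\qnot}[\log b(\xi)],
\end{equation*}
which immediately exhibits the four groups claimed in the statement, namely
\begin{gather*}
E_1 = \E_{\qnot}\, n K(\gstar(\xi)), \quad E_2 = \E_{\qnot}[\log \qnot(\xi)], \\
E_3 = \sum_{j=1}^d \truehj\, \E_{\qnot}[\log \xi_j], \quad E_4 = \E_{\qnot}[\log b(\xi)],
\end{gather*}
so that $\PsiK = -E_1 - E_2 + E_3 + E_4$ holds by construction.

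It then remains to reduce each $E_i$ to coordinate-wise moments of the generalized gamma. Since $\qnot = \prod_j \qnotj$ is mean-field, every expectation factorizes over $j$; in particular $E_1$ becomes the product $n \prod_{j=1}^d \E_{\qnotj}[\xi_j^{2\truekj}]$ (a product, not a sum, because $K(\gstar(\xi))$ is a single monomial), while $E_2$ and $E_3$ become sums of one-dimensional expectations. To evaluate these I would apply the change of variables $u = \xi_j^{2 k_j}$, under which $\qnotj$ becomes a $\mathrm{Gamma}(\lambda_j, \beta_j)$ density truncated to $[0,1]$; the quantities $\E_{\qnotj}[\xi_j^{2\truekj}]$, $\E_{\qnotj}[\log \xi_j]$, $\E_{\qnotj}[\xi_j^{2 k_j}]$ and the normalizing constant $Z_j$ of $\qnotj$ then all reduce to lower incomplete gamma functions (and their logarithmic derivatives for the $\log \xi_j$ moments). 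Substituting $\log \qnotj(\xi_j) = (2 k_j \lambda_j - 1)\log \xi_j - \beta_j \xi_j^{2 k_j} - \log Z_j$ into $E_2$ completes the explicit expressions.

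The main obstacle is bookkeeping rather than conceptual. The $b(\xi)$ contribution $E_4$ admits no closed form, since $b$ is merely a positive $C^\infty$ function; it must be carried symbolically and dealt with only later, in Lemma \ref{lemma:PsiK_lb}, through the bound $b > c > 0$ from Theorem \ref{thm:res}. The second delicate point is the interplay between the variational exponents $k_j$ and the true exponents $\truekj$: when $k_j \ne \truekj$ the moment in $E_1$ is a fractional moment $\E_{\qnotj}[u^{\truekj/k_j}]$ of the truncated gamma, producing a ratio of incomplete gamma functions that must be kept general since the lemma is stated for arbitrary $\qnot \in \baseQ$. Keeping the product structure of $E_1$ intact and the distinction between the tilde and non-tilde indices straight is where the most care is needed.
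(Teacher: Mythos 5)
Your proposal is correct and takes essentially the same route as the paper's proof: the identical four-term split $\PsiK = -E_1 - E_2 + E_3 + E_4$ obtained by substituting the standard forms \eqref{kmono} and \eqref{restmono} into \eqref{PsiK}, followed by the same coordinate-wise reduction --- $E_1$ as a product of fractional moments $\E_{\qnotj}\bigl[\xi_j^{2\truekj}\bigr]$ expressed through ratios of (regularized) lower incomplete gamma functions, $E_2$ and $E_3$ as sums over $j$, and $E_4$ carried symbolically until the bound $b(\xi)>c>0$ is invoked in Lemma \ref{lemma:PsiK_lb}. The only cosmetic difference is that you sketch evaluating $\E_{\qnotj}\log\xi_j$ via shape-parameter derivatives of the incomplete gamma, whereas the paper simply leaves these log-moments unevaluated.
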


\begin{proof}
	With standard form and Main Formula 1 in \cite{watanabe_algebraic_2009}, we have
	\begin{align*}
		n K(\gstar(\xi)) &= n \xi^{2\truek}\\
		\transprior &= b(\xi) \abs{\xi^{\trueh}}
	\end{align*}
	with $b(\xi) > 0$.
	We therefore have
	\begin{align*}
		\PsiK 
		&= - n\E_\qnot \sqbrac{\xi^{2\truek}} - \E_\qnot \log \qnot + \E_\qnot \sqbrac{\log \xi^{\trueh}}  +  \E_\qnot \sqbrac{\log b(\xi)}  \\
		&= - E_1 - E_2 + E_3 + E_4
	\end{align*}
	where we have named each term in the sum 
	\begin{align*}
		E_1 &:= n\E_\qnot\sqbrac{\xi^{2\truek}} , \quad E_2 := \E_\qnot \log \qnot \\
		E_3 &:= \E_\qnot \sqbrac{\log \xi^{\trueh}}, \quad E_4 := \E_\qnot \sqbrac{\log b(\xi)} 
	\end{align*}

	In the following we will make use of the following elementary facts about the univariate generalized gamma density truncated to $[0,1]$. They are stated in the same notaton as in \cite{bhattacharya_evidence_2020}. The normalizing constant of $q_j$ is given by $B(\lambda_j,k_j, \beta_j)$ where
	\begin{equation}
	B(\lambda, k,\beta) = \frac{\beta^{-\lambda} \Gamma(\lambda) \gamma(\lambda,\beta)}{2k}
	\label{B}
	\end{equation}
	and $\gamma(a,x) = \frac{1}{\Gamma(a)} \int_0^x t^{a-1} e^{-t} \,d t$ is the (regularized) lower incomplete gamma function. The quantity $E_{q_j} \xi^{2k_j} = G(\lambda_j,\beta_j)$ where
	\begin{equation}
		G(\lambda, \beta) = \frac{\lambda}{\beta} \frac{\gamma(\lambda+1,\beta)}{\gamma(\lambda,\beta)}.
		\label{G}
	\end{equation}

	First we have
	\begin{align*}
		E_1 &= n \prod_{j=1}^d \E_{\qnotj} \xi^{2\truekj} \\
		&= n \prod_{j=1}^d \beta_j^{-\frac{\truekj}{\qkj}} \frac{\Gamma( \qlj + \frac{\truekj}{\qkj}) \gamma(\qlj + \frac{\truekj}{\qkj}, \beta_j )}{ \Gamma(\qlj ) \gamma(\qlj , \beta_j )}.
	\end{align*}
	Next we have
	\begin{align*}
		E_2 &= \sum_{j=1}^d \qhj \E_\qnotj \log \xi_j - \beta_j G(\qlj, \beta_j) - \log B(\qkj, \qhj, \beta_j).
	\end{align*}
	For the third term we have
	\begin{align*}
		E_3 &= \sum_{j=1}^d \truehj \E_\qnotj \log \xi_j.
	\end{align*}
\end{proof}

In the lemma below, we improve upon the lower bound provided in Theorem  3.1 in \cite{bhattacharya_evidence_2020} where the constant is given by
$$
\tilde \lambda  (1 - \prod_{j=m+1}^d G(\tilde \lambda_j, \beta_j)) + \sum_{j=m+1}^d [ \beta_j G(\tilde \lambda_j,\beta_j) + \log B(\tilde k_j, \tilde h_j, \beta_j) ]- \sum_{j=1}^d \log(2 \tilde k_j) - \sum_{j=1}^d \log(\tilde \lambda_j),
$$
where $\beta_j = 1$ for $j \ge m+1$. 

\begin{lemma}
Suppose the conditions of Lemma \ref{lemma:PsiKterms} hold. We have, for $n$ large,
$$
\sup_{\qnot \in \baseQ} \Psi(\qnot) \ge -\rlct \log n + C,
$$
where 
$$
C = \sup_{\lambdarest, \krest, \betarest} \Crest.
$$
\label{lemma:PsiK_lb}
\end{lemma}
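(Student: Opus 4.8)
The plan is to bound the supremum from below by evaluating $\PsiK$ on a judiciously chosen slice of $\baseQ$ and then optimising over the remaining parameters. Concretely, I would restrict attention to those $\qnot \in \baseQ$ whose first-coordinate parameters are pinned to the oracle values $\lambda_1 = \rlct$, $k_1 = \tilde{k}_1$, $\beta_1 = n$, leaving $(\lambdarest, \krest, \betarest)$ free; since the full supremum dominates $\PsiK$ at every such $\qnot$, it is enough to analyse this slice as $n \to \infty$. The starting point is the decomposition $\PsiK = -E_1 - E_2 + E_3 + E_4$ supplied by Lemma \ref{lemma:PsiKterms}, whose terms I would track individually, exploiting the mean-field product form of $\baseQ$ in \eqref{baseQ}.

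The crux is a cancellation in the first coordinate. With $\lambda_1 = \rlct$ and $k_1 = \tilde{k}_1$, the variational family's own exponent $h_1 := 2k_1\lambda_1 - 1$ coincides with the resolution exponent $\tilde{h}_1$, because $\tilde{\lambda}_1 = \frac{\tilde{h}_1 + 1}{2\tilde{k}_1} = \rlct$. Consequently the term $-h_1\,\E_{\qnot^1}\log\xi_1$ appearing in $-E_2$ and the term $+\tilde{h}_1\,\E_{\qnot^1}\log\xi_1$ appearing in $E_3$ cancel exactly. This is essential: since $\beta_1 = n \to \infty$ forces the first marginal to concentrate at $\xi_1 = 0$, the quantity $\E_{\qnot^1}\log\xi_1$ diverges like $-(2\tilde{k}_1)^{-1}\log n$, and without the cancellation it would pollute the bound with a spurious logarithmic term. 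The genuine leading term instead comes from the normalising constant: via \eqref{B}, the first-coordinate contribution $\log B(\lambda_1, k_1, \beta_1)$ inside $-E_2$ equals $-\rlct\log n + \log\Gamma(\rlct) + \log\gamma(\rlct, n) - \log(2\tilde{k}_1)$, and since the regularised incomplete gamma satisfies $\gamma(\rlct, n) \to 1$, this supplies exactly the $-\rlct\log n$ of the statement.

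It then remains to show that every other contribution converges to an $n$-independent constant. The surviving first-coordinate pieces are harmless: by \eqref{G} one has $\beta_1 G(\lambda_1, \beta_1) = n\,G(\rlct, n) \to \rlct$, and the first factor of the product $E_1$ satisfies $n\,\E_{\qnot^1}[\xi_1^{2\tilde{k}_1}] = n\,G(\rlct, n) \to \rlct$ as well, so $E_1 \to \rlct\prod_{j=2}^d \E_{\qnotj}[\xi_j^{2\tilde{k}_j}]$. The coordinates $j \ge 2$ carry fixed $\beta_j$, so all of their $E_1$, $E_2$, $E_3$ contributions are already free of $n$. Gathering the limits of the first coordinate together with the $j \ge 2$ terms defines precisely the constant $\Crest$, whence $\PsiK = -\rlct\log n + \Crest + o(1)$ on the slice; taking the supremum over $(\lambdarest, \krest, \betarest)$ then yields $\sup_{\qnot \in \baseQ}\PsiK \ge -\rlct\log n + C + o(1)$ with $C = \sup_{\lambdarest, \krest, \betarest}\Crest$.

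The main obstacle is the term $E_4 = \E_{\qnot}[\log b(\xi)]$, which does not factorise across coordinates and so cannot be handled marginal-by-marginal. I would control it using the concentration of the first marginal at $\xi_1 = 0$ together with the fact that $b$ is $C^\infty$ and bounded away from $0$ and $\infty$ on the compact chart (recall $0 < c < b(\xi)$ from Theorem \ref{thm:res}), so that dominated convergence gives $E_4 \to \E[\log b(0,\xi_2,\ldots,\xi_d)]$ under the product of the remaining marginals, another $n$-independent constant absorbed into $\Crest$. A secondary technical point is the interchange of the supremum over $(\lambdarest, \krest, \betarest)$ with the $n \to \infty$ limit, which I would justify by establishing uniformity of the $o(1)$ remainder over a compact set of rest-parameters containing a near-maximiser of $\Crest$; this is what promotes the pointwise convergence to the uniform lower bound asserted in the lemma for $n$ large.
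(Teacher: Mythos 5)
Your proposal is correct and follows essentially the same route as the paper's proof: pin $\lambda_1 = \rlct$, $k_1 = \tilde k_1$, $\beta_1 = n$, expand $\PsiK$ via the $E_1,\ldots,E_4$ decomposition of Lemma \ref{lemma:PsiKterms}, and apply the incomplete-gamma asymptotics $nG(\lambda,n)\to\lambda$ and $\log B(\lambda_1,k_1,n)\sim -\rlct\log n$, with the $\E\log\xi_1$ cancellation you make explicit being precisely why the paper's constant $\Crest$ involves only the coordinates $j\ge 2$. The only (harmless) deviations are that the paper bounds $E_4$ crudely by $\log b_0$ with $b_0 = \inf_\xi b(\xi) > 0$ rather than passing to a dominated-convergence limit, and it does not spell out the uniformity needed to pass from a fixed choice of $(\lambdarest,\krest,\betarest)$ to the supremum $C$ --- both points where your write-up is, if anything, more careful than the original.
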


\begin{proof}
Let $\qnot$ be such that 
$$
\lambda_1 = \tilde \lambda_1, \quad k_1 = \tilde k_1, \quad \beta_1 =n.
$$
We can use Lemma \ref{lemma:PsiKterms} to obtain the expression for $\PsiK$. Next, using the fact that $n G(\lambda,n) \approx \lambda$ and $\log B(k,h,n) \approx - \lambda \log n$ and $b(\xi)$ is bounded below away from zero, $b(\xi) > b_0 := \inf_\xi b(\xi) > 0$. We get that for $n$ large, 
$$
\sup_\qnot \PsiK \ge -\rlct \log n + \Crest,
$$
where 
\begin{align}
	\Crest &= \rlct \left( 1 -  \prod_{j=2}^d \beta_j^{-\frac{\truekj}{\qkj}} \frac{\Gamma( \qlj + \frac{\truekj}{\qkj}) \gamma(\qlj + \frac{\truekj}{\qkj}, \beta_j )}{ \Gamma(\qlj ) \gamma(\qlj , \beta_j )} \right) \nonumber \\
	&+ \sum_{j=2}^d \left ( (\truehj - \qhj) \E_\qnotj \log \xi_j + \beta_j G(\qlj, \beta_j) - \log B(\qkj, \qhj, \beta_j) \right) + \log b_0.
	\label{C_rest}
\end{align}
\end{proof}

\section{Experiment details}
\label{appendix:experiment}
We first provide details on the model-truth-prior triplets considered in Section \ref{sec:experiments}. Next we describe the architecture adopted for $G_\theta$ in the implementation of the normalizing flow. We then detail the training procedure for learning the normalizing flow and the estimation of the evaluation measures. Finally, additional experimental results are given and discussed.

\subsection{Model-truth-prior triplets}

In all triplets considered, the prior over the neural network weights is chosen to be the standard Gaussian.

In the \textbf{one-layer $\tanh$} experiment, the input $x \in \mathbb R$ follows the uniform distribution on $[-1,1]$, and the response variable $y \in \mathbb R$ is modeled as
$$
p(y\vert x,w) = \frac{1}{\sqrt{2\pi}} \exp( - \frac{1}{2} (y-f(x,w))^2 ),
$$
where 
$$f(x,w) = \sum_{h=1}^H b_h \tanh(a_h x)$$ 
is a $\tanh$ network with $H$ hidden units and $w$ is the collection of neural network weights $\{(a_h,b_h)\}_{h=1}^H$. 
We shall consider two true distributions, one in which we know the true RLCT and multiplicity, which we call \textbf{one-layer $\tanh$ zero-mean}, and the other where we do not, which we call simply \textbf{one-layer $\tanh$}. For the \textit{zero-mean} setting, we set 
$$  
p_0(y\vert x) = p(y\vert x,0) =\frac{1}{\sqrt{2\pi}} \exp( - \frac{1}{2} y^2 ).
$$ 
In this case, it was shown in \cite{aoyagi_resolution_2006} that
$$
\rlct = \frac{H+i^2 + i}{4i+2}
$$
and $ m = 2$ if $i^2 = H$, and $m=1$ if $i^2 < H$ where $i$ is the maximum integer satisfying $i^2 \le H$.  In contrast, were this a regular statistical model,  we would have $\rlct = H$.  
For the other truth setting, we simply take a fixed draw of $w_0$ from the standard Gaussian. In this case the true RLCT and multiplicity are \textit{unknown}. 

In the \textbf{reduced rank regression} experiment, the input $x \in \mathbb R^M$ is generated from standard Gaussian and the response variable $y \in \mathbb R^N$ is modeled as 
$$
p(y\vert x,w) = (2\pi)^{-N/2} \exp \{ -\frac{1}{2} \vert \vert  y-BAx\vert \vert ^2 \},
$$
where $\{w = (A,B) \vert  A \in \mathbb R^{H \times M}, B \in \mathbb R^{N \times H}\}$. 
This model is readily seen to be a special case of a neural network with hidden units $H$ and identity activation function. We shall set $M = H + 3$ and $N = H$. 
The true parameters $A_0$ and $B_0$ are given as follows. 
The matrix $B_0$ is set to be the identity matrix $I_{N \times N}$. The matrix $A_0$ is set to be an identity matrix with dimension $H$ plus three additional columns of 1: $A_0 = [	I_{H \times H}; J_{H \times 3}]$. The rank $r$ for $B_0A_0$ equals $H$.
Under this condition, $N + H < M +r$ is trivially satisfied and we are in Case iii) of \citet{aoyagi_stochastic_2005} for which the RLCT was derived in \cite{aoyagi_stochastic_2005} to be
$$
\rlct = (NH - Hr + Mr)/2,  m=1.
$$
Note that were this a regular model, we would instead have $\rlct=(MH+NH)/2$. Notably the multiplicity is always either $m=1$ or $m=2$ for the reduced rank regression model. 

In the \textbf{feedforward ReLU} experiment, the input $x \in \mathbb R^{13}$ is generated from the standard multivariate Gaussian and the response variable $y \in \mathbb R$ is modeled as Gaussian $N(f(x,w),1)$ where $f(x, w) = w_2 \operatorname{ReLU}(w_1 x)$ for $w_1 \in \mathbb R^{H \times 13}$ and $w_2 \in \mathbb R^{1 \times H}$. The true distribution $p_0(y|x)$ is fixed at a random draw of $w_1,w_2$ from the standard Gaussian. The true RLCT and multiplicity are \textit{unknown} for this truth-prior-triplet.

\subsection{Normalizing flow}

The generalized gamma base distribution $\qnot$ is initialized (and frozen) at
\begin{align*}
	\bfl_0 &= ( 1, \ldots, 1), \\
	\bfk_0 &= (1, \ldots, 1), \\
	\bfbeta_0 &= (n, d/2, \ldots, d/2).
\end{align*}
The Gaussian base distribution is initialized (and frozen) at the standard multivariate Gaussian with mean zero and identity covariance. Only the weights $\theta$ in the invertible architecture $G_\theta$ are updated. 

Next. we detail the implementation of $G_\theta$. With $r$ denoting a binary mask, a so-called affine coupling layer acts as follows for $u, v \in \mathbb R^d,$
$$
u \mapsto v = (1-r) \odot u + r \odot (u \odot \exp(s( r \odot u)) + t(r \odot u) ),
$$
where $s$ and $t$ are scaling and translation networks, respectively. We implement the translation network $t$ as a two-hidden-layer feedforward (leaky) ReLU neural network with $\tanh$ output activation function. The scaling $t$ is another two-hidden-layer feedforward (leaky) ReLU neural network with identity output activation function. 
Note the binary mask $r$ must alternate from one affine coupling layer to the next, for otherwise there would be little expressive power in the resulting network. Note that the specific architecture of $G_\theta$ has rendered the log Jacobian term, $\log \vert G_\theta'(\cdot) \vert$, computationally tractable. 
Below is a printout of the network $G_\theta$ with 2 alternating coupling pairs and 4 hidden units:

\begin{tiny}
\begin{verbatim}
  (s): ModuleList(
    (0): Sequential(
      (0): Linear(in_features=210, out_features=4, bias=True)
      (1): LeakyReLU(negative_slope=0.01)
      (2): Linear(in_features=4, out_features=4, bias=True)
      (3): LeakyReLU(negative_slope=0.01)
      (4): Linear(in_features=4, out_features=4, bias=True)
      (5): LeakyReLU(negative_slope=0.01)
      (6): Linear(in_features=4, out_features=210, bias=True)
      (7): Tanh()
    )
    (1): Sequential(
      (0): Linear(in_features=210, out_features=4, bias=True)
      (1): LeakyReLU(negative_slope=0.01)
      (2): Linear(in_features=4, out_features=4, bias=True)
      (3): LeakyReLU(negative_slope=0.01)
      (4): Linear(in_features=4, out_features=4, bias=True)
      (5): LeakyReLU(negative_slope=0.01)
      (6): Linear(in_features=4, out_features=210, bias=True)
      (7): Tanh()
    )
    (2): Sequential(
      (0): Linear(in_features=210, out_features=4, bias=True)
      (1): LeakyReLU(negative_slope=0.01)
      (2): Linear(in_features=4, out_features=4, bias=True)
      (3): LeakyReLU(negative_slope=0.01)
      (4): Linear(in_features=4, out_features=4, bias=True)
      (5): LeakyReLU(negative_slope=0.01)
      (6): Linear(in_features=4, out_features=210, bias=True)
      (7): Tanh()
    )
    (3): Sequential(
      (0): Linear(in_features=210, out_features=4, bias=True)
      (1): LeakyReLU(negative_slope=0.01)
      (2): Linear(in_features=4, out_features=4, bias=True)
      (3): LeakyReLU(negative_slope=0.01)
      (4): Linear(in_features=4, out_features=4, bias=True)
      (5): LeakyReLU(negative_slope=0.01)
      (6): Linear(in_features=4, out_features=210, bias=True)
      (7): Tanh()
    )
  )
  (t): ModuleList(
    (0): Sequential(
      (0): Linear(in_features=210, out_features=4, bias=True)
      (1): LeakyReLU(negative_slope=0.01)
      (2): Linear(in_features=4, out_features=4, bias=True)
      (3): LeakyReLU(negative_slope=0.01)
      (4): Linear(in_features=4, out_features=4, bias=True)
      (5): LeakyReLU(negative_slope=0.01)
      (6): Linear(in_features=4, out_features=210, bias=True)
    )
    (1): Sequential(
      (0): Linear(in_features=210, out_features=4, bias=True)
      (1): LeakyReLU(negative_slope=0.01)
      (2): Linear(in_features=4, out_features=4, bias=True)
      (3): LeakyReLU(negative_slope=0.01)
      (4): Linear(in_features=4, out_features=4, bias=True)
      (5): LeakyReLU(negative_slope=0.01)
      (6): Linear(in_features=4, out_features=210, bias=True)
    )
    (2): Sequential(
      (0): Linear(in_features=210, out_features=4, bias=True)
      (1): LeakyReLU(negative_slope=0.01)
      (2): Linear(in_features=4, out_features=4, bias=True)
      (3): LeakyReLU(negative_slope=0.01)
      (4): Linear(in_features=4, out_features=4, bias=True)
      (5): LeakyReLU(negative_slope=0.01)
      (6): Linear(in_features=4, out_features=210, bias=True)
    )
    (3): Sequential(
      (0): Linear(in_features=210, out_features=4, bias=True)
      (1): LeakyReLU(negative_slope=0.01)
      (2): Linear(in_features=4, out_features=4, bias=True)
      (3): LeakyReLU(negative_slope=0.01)
      (4): Linear(in_features=4, out_features=4, bias=True)
      (5): LeakyReLU(negative_slope=0.01)
      (6): Linear(in_features=4, out_features=210, bias=True)
    )
  )
)
\end{verbatim}
\end{tiny}

\subsection{Training}
To train the normalizing flow with a generalized gamma base distribution, we first begin by noting that the generalized gamma distribution is simply related to the gamma distribution. Let $V_j$ be a gamma random variable with shape $\qlj$ and rate $\beta_j$, then $V_j^{1/(2\qkj)}$ has density
$
\xi_j^{2\qkj \qlj - 1} \exp(-\beta_j \xi_j^{2\qkj}).
$
This is convenient because the pathwise derivative for the gamma distribution is readily available in libraries such as PyTorch. 
The corresponding optimization objective is given by
\begin{align}
\elbo(\theta) &:= \E_{\xi \sim \qnot} \left [ \sum_{i=1}^n \log p(y_i\vert x_i,G_\theta(\xi)) + \log (\varphi(G_\theta(\xi)) \vert  G_\theta'(\xi) \vert ) \right] - \E_{\xi \sim \qnot}  \log \qnot(\xi)  \nonumber \\
&= \E_{v \sim  Gamma(\bfl_0,\bfbeta_0)} \left [ \sum_{i=1}^n \log p(y_i\vert x_i,G_\theta(v^{1/(2\bfk)})) + \log (\varphi(G_\theta(v^{1/(2\bfk)})) \vert  G_\theta'(v^{1/(2\bfk)}) \vert ) \right] - \E_{\xi \sim \qnot}  \log \qnot(\xi). 
\label{elbo_gengamma}
\end{align}
The number of epochs was set to 5000 for full-batch training using ADAM with initial learning rate of 0.01 for $\theta$ in $G_\theta$. We estimate the expectation in the ELBO using $M=10$ samples except for the entropy component, $-\E_{\qnot} \log \qnot$, which was derived analytically, using Equations \eqref{B} and \eqref{G}.

The exact same training parameters were used for the normalizing flow with Gaussian base distribution in which case the ELBO is given by
\begin{align}
\elbo(\theta):= &\E_{\xi \sim N(0,I_d)} \left [ \sum_{i=1}^n \log p(y_i\vert x_i,G_\theta(\xi)) + \log (\varphi(G_\theta(\xi)) \vert  G_\theta'(\xi) \vert ) \right] + H(N(0,I_d)),
\label{elbo_gaussian}
\end{align}
where $H(\cdot)$ denotes the entropy of the distribution, i.e., $H(\qnot) := -\E_{\xi \sim \qnot}  \log \qnot(\xi)$.

\subsection{Estimating MVFE and VGE}
To estimate normalized MVFE, we use the learned $\hat q^*$ and the empirical training entropy $S_n$ which we know in simulations. To estimate expectations over $\hat q^*$, 1000 samples are used. The VGE is estimated using \eqref{hatGn} with an independent dataset $\mathcal D_{n'}$ of sample size $n'=10000$. 

To estimate coefficients $\lvfe$ and $\lvge$, we generate 30 realizations of training data $\mathcal D_n$ for each of 10 possible sample sizes $n$ evenly spaced on the log scale between 3.0 and 3.7: $n \in$ \{1000, 1196, 1431, 1711, 2047, 2448, 2929, 3503, 4190, 5012\}. This allows us to estimate the left-hand sides of \eqref{vfe_expansion} and  \eqref{vge_expansion}. The coefficients themselves are estimated by fitting least squares, against $\log n$ for the average normalized MVFE and $1/n$ for the average VGE. For the former, we fit an intercept, while for the latter the intercept is forced to be zero.

\subsection{Additional experimental results}
In this section, we display the MVFE and VGE for all four experiments in Table \ref{table:triplet_summary}. We first group by the individual base distributions, which allows for greater readability as the y-axis scale is consistent within the base distribution. We then juxtapose the ``best" performing $G_\theta$, according to MVFE, for each base distribution, which usually happens to be the architecture $G_\theta$ with 4 alternating pairs and 16 hidden units. Similarly, we also plot the least expressive $G_\theta$ which is the \texttt{2\_4} configuration.

\begin{figure}[ht]
\vskip 0.2in
\begin{center}
\centerline{\includegraphics[width=\columnwidth]{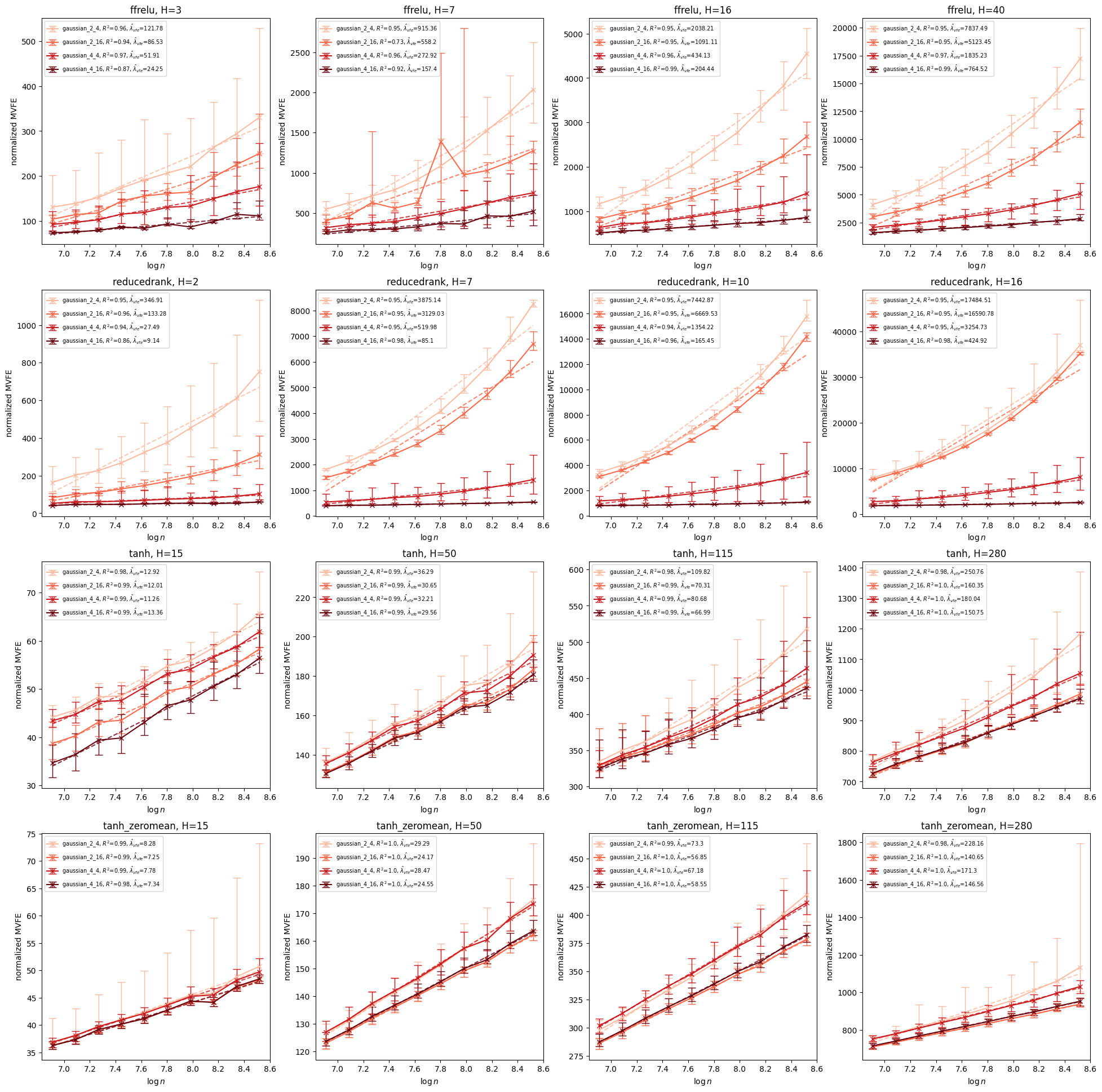}}
\caption{MVFE for Gaussian base distribution.}
\label{MVFE_vs_logn_gaussian}
\end{center}
\vskip -0.2in
\end{figure}

\begin{figure}[ht]
\vskip 0.2in
\begin{center}
\centerline{\includegraphics[width=\columnwidth]{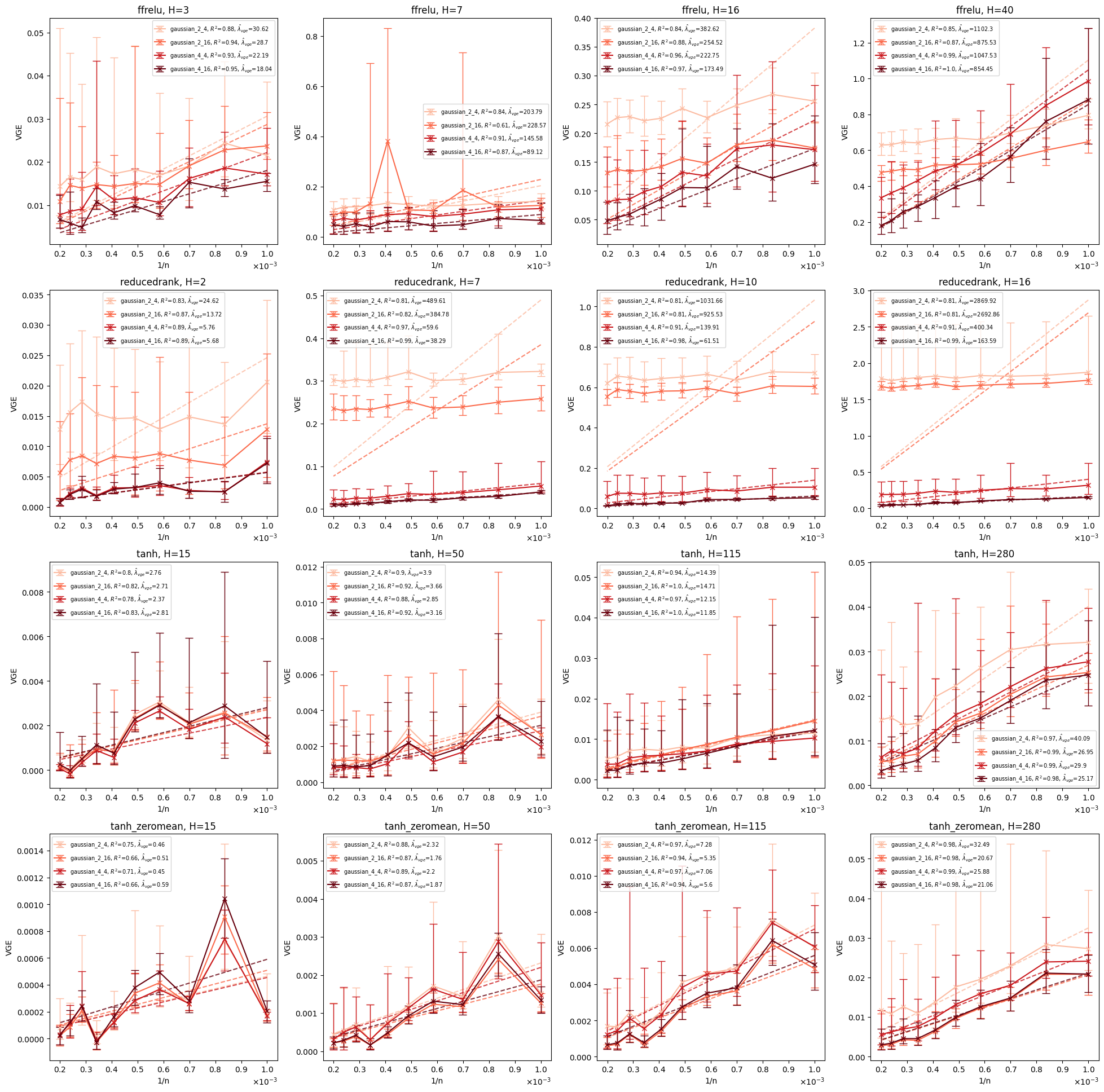}}
\caption{VGE for Gaussian base distribution.}
\label{VGE_vs_inverse_n_gaussian}
\end{center}
\vskip -0.2in
\end{figure}

\begin{figure}[ht]
\vskip 0.2in
\begin{center}
\centerline{\includegraphics[width=\columnwidth]{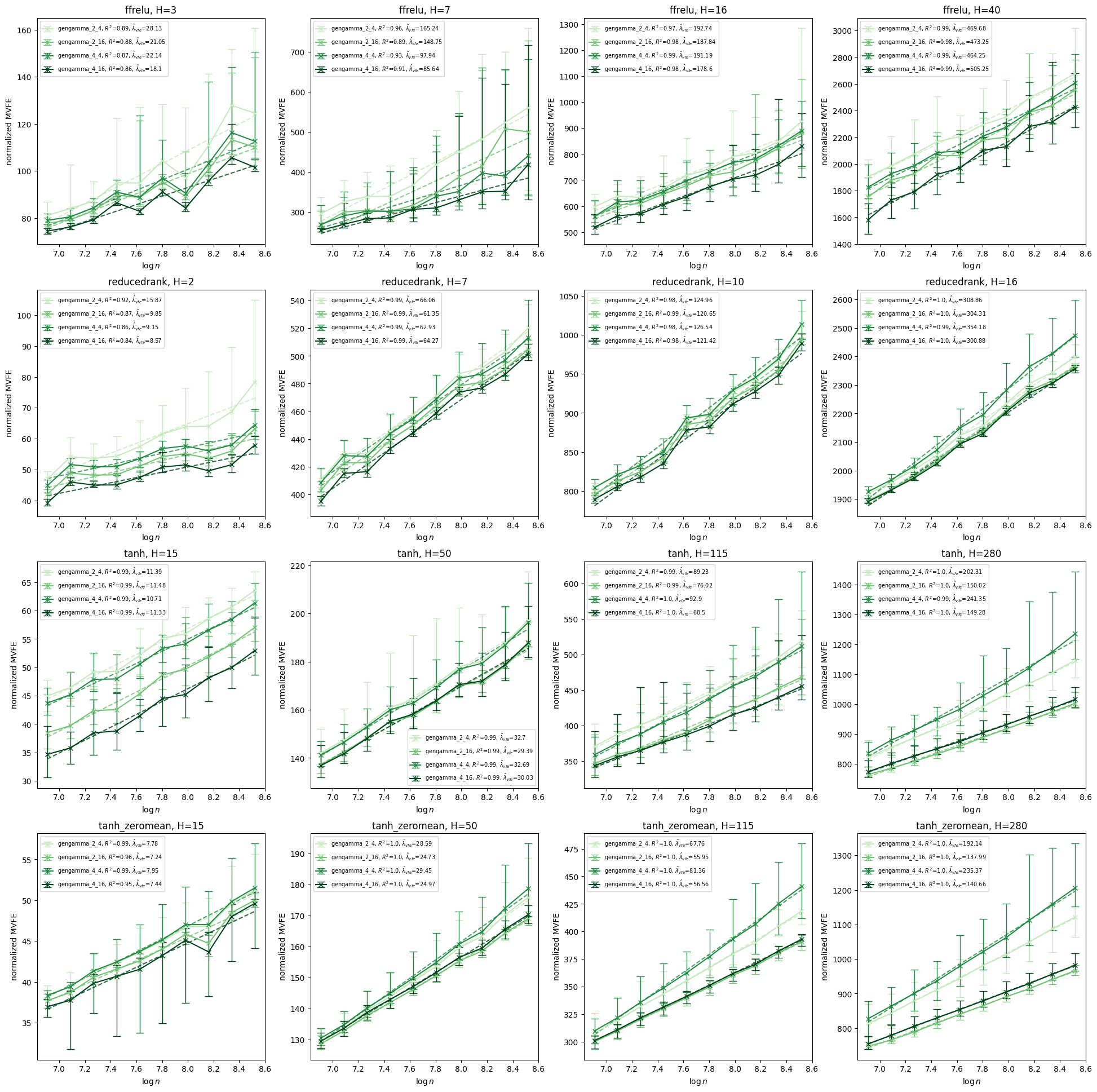}}
\caption{MVFE for generalized gamma base distribution.}
\label{MVFE_vs_logn_gengamma}
\end{center}
\vskip -0.2in
\end{figure}

\begin{figure}[ht]
\vskip 0.2in
\begin{center}
\centerline{\includegraphics[width=\columnwidth]{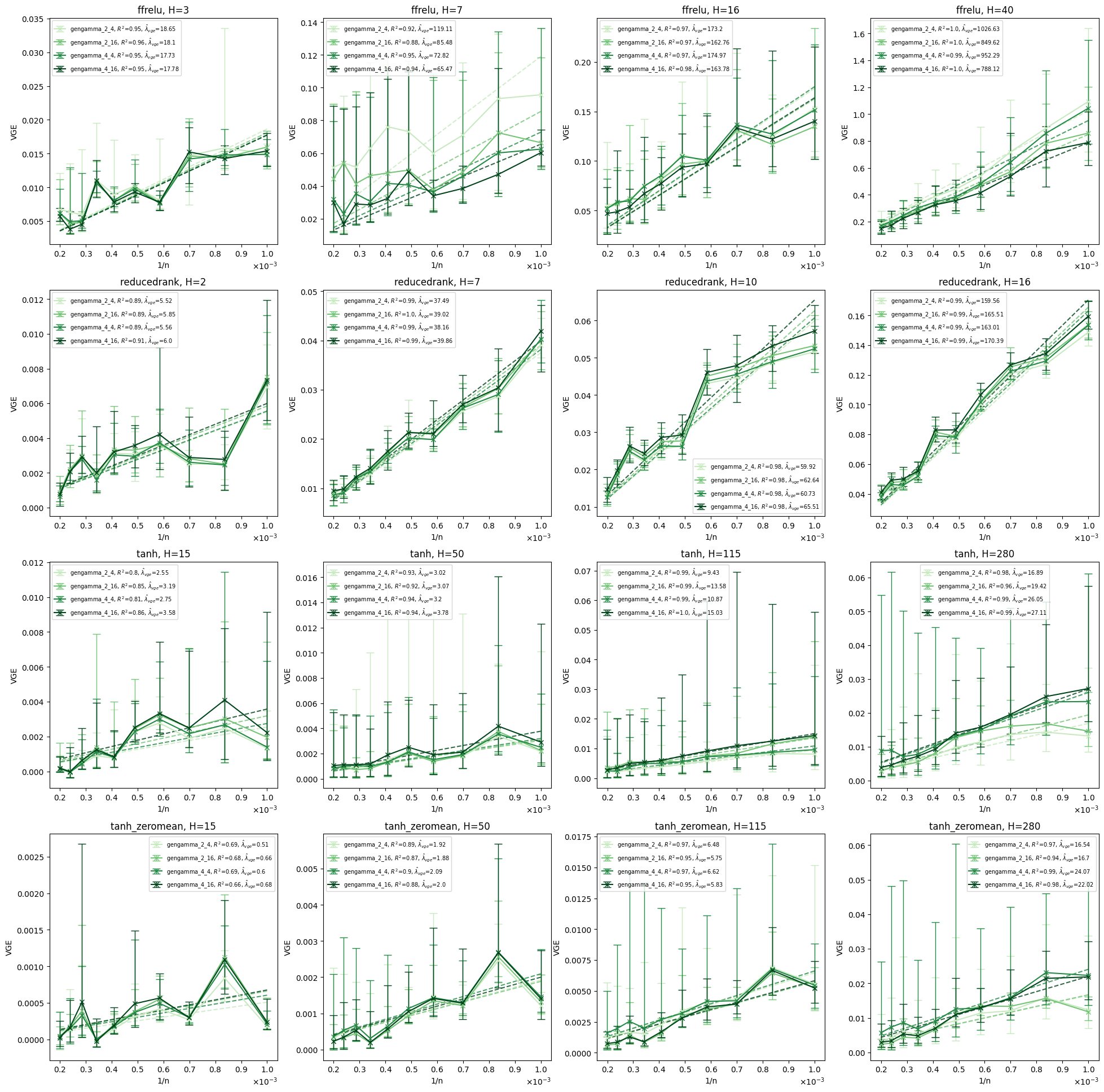}}
\caption{VGE for generalized gamma base distribution.}
\label{VGE_vs_inverse_n_gengamma}
\end{center}
\vskip -0.2in
\end{figure}

\begin{figure}[ht]
\vskip 0.2in
\begin{center}
\centerline{\includegraphics[width=\columnwidth]{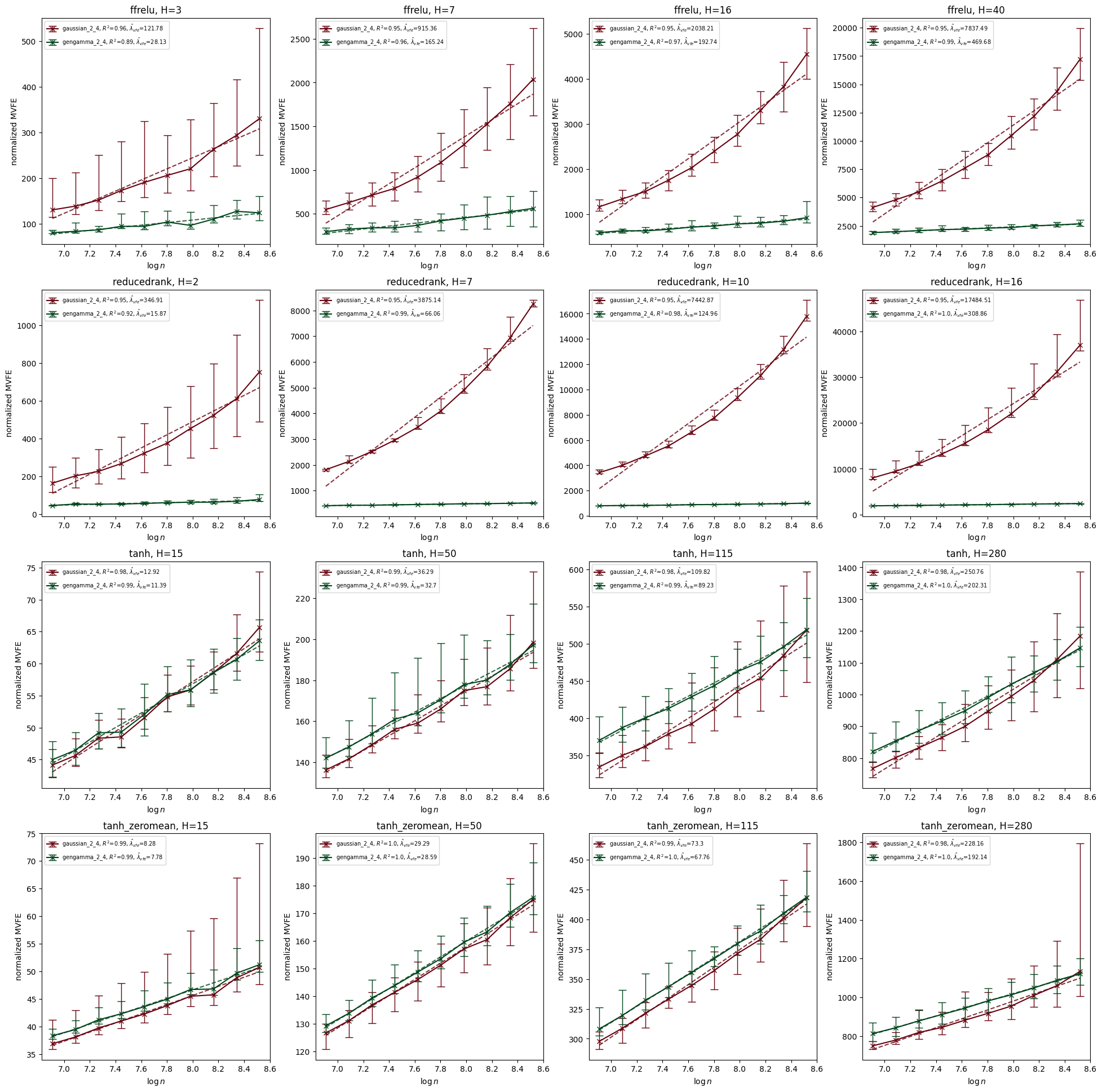}}
\caption{MVFE for $G_\theta$ with the least expressive \texttt{2\_4} configuration.}
\label{MVFE_vs_logn_gengamma_gaussian_2_4_only}
\end{center}
\vskip -0.2in
\end{figure}

\begin{figure}[ht]
\vskip 0.2in
\begin{center}
\centerline{\includegraphics[width=\columnwidth]{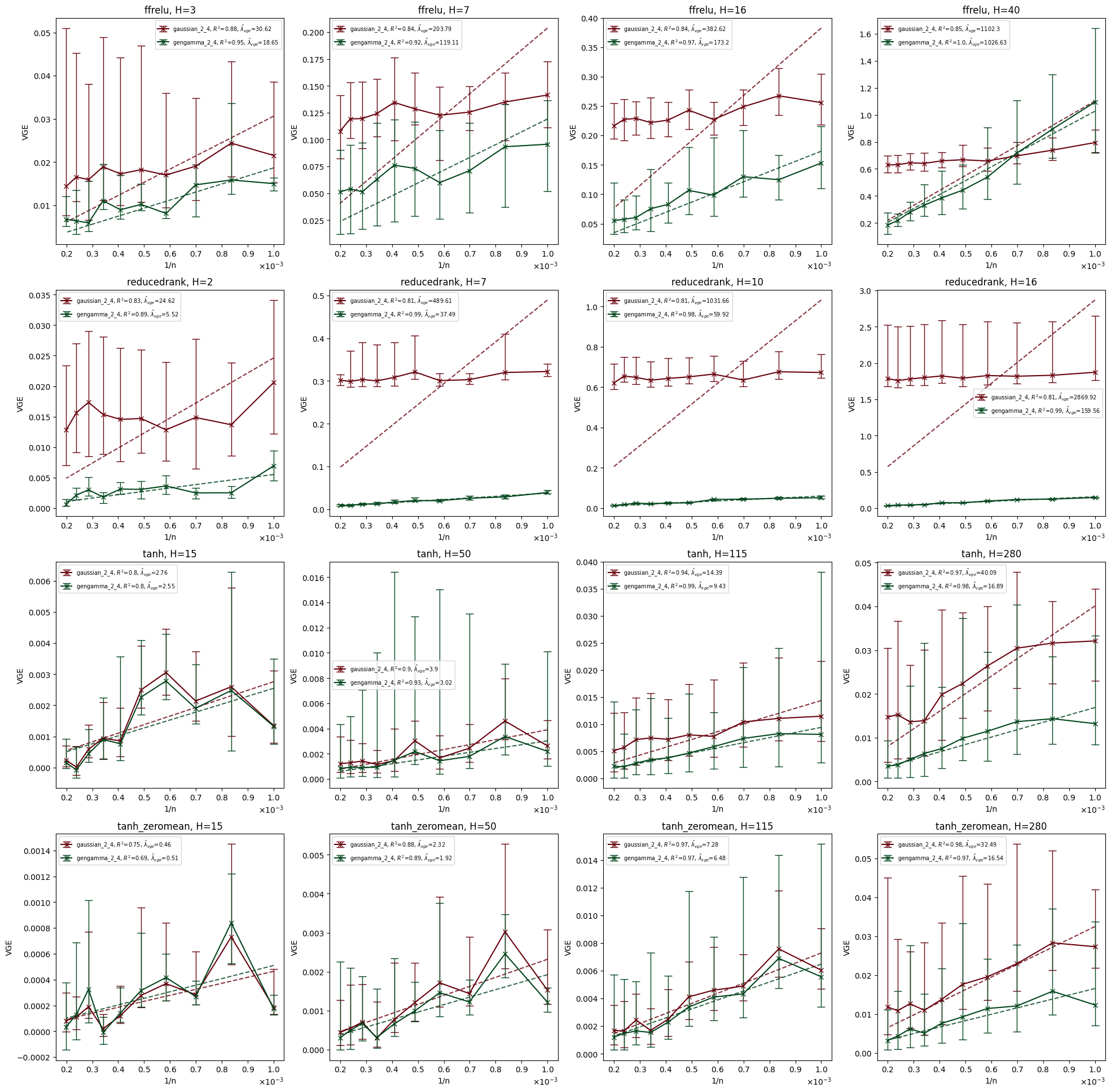}}
\caption{VGE for $G_\theta$ with the least expressive \texttt{2\_4} configuration.}
\label{VGE_vs_inverse_n_gengamma_gaussian_2_4_only}
\end{center}
\vskip -0.2in
\end{figure}

\begin{figure}[ht]
\vskip 0.2in
\begin{center}
\centerline{\includegraphics[width=\columnwidth]{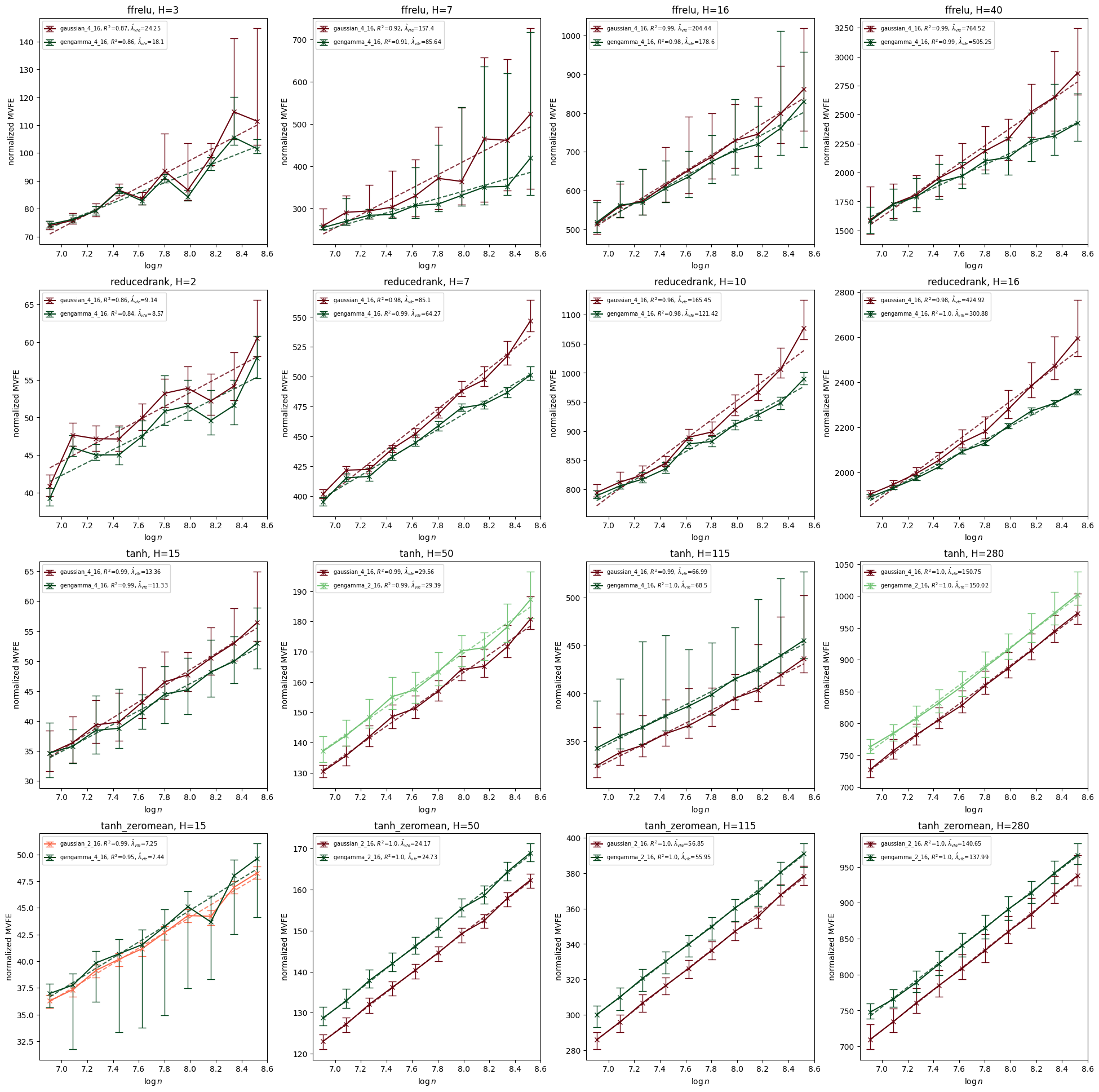}}
\caption{MVFE for $G_\theta$ with the best performing architecture for each base distribution, as judged by MVFE. This is usually the \texttt{4\_16} configuration, but not always.}
\label{MVFE_vs_logn_gengamma_gaussian_best_line_only}
\end{center}
\vskip -0.2in
\end{figure}

\begin{figure}[ht]
\vskip 0.2in
\begin{center}
\centerline{\includegraphics[width=\columnwidth]{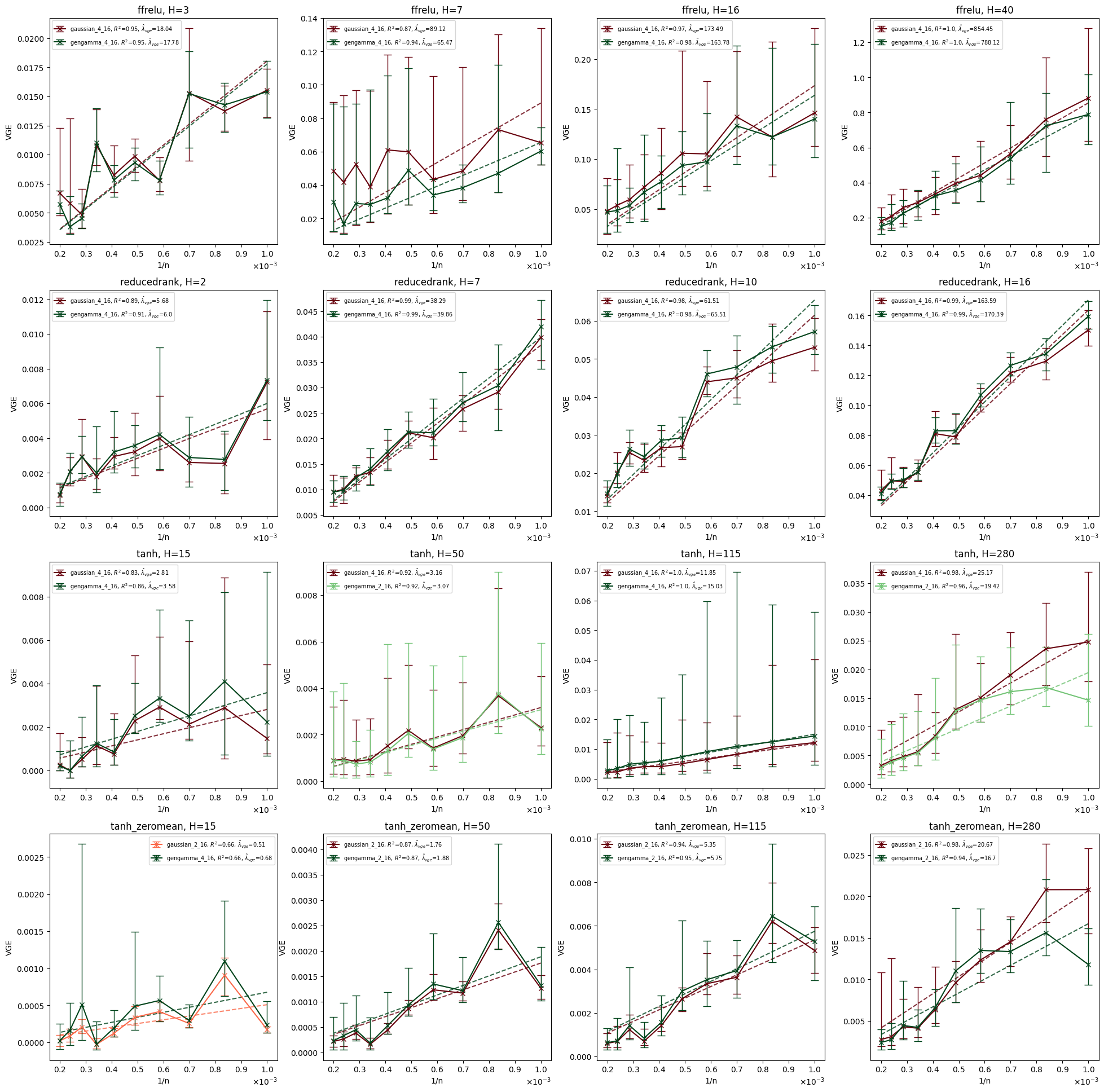}}
\caption{VGE for $G_\theta$ with the best performing architecture for each base distribution, as judged by MVFE. This is usually the \texttt{4\_16} configuration, but not always.}
\label{VGE_vs_inverse_n_gengamma_gaussian_best_line_only}
\end{center}
\vskip -0.2in
\end{figure}

\begin{figure}[ht]
\vskip 0.2in
\begin{center}
\centerline{\includegraphics[width=\columnwidth]{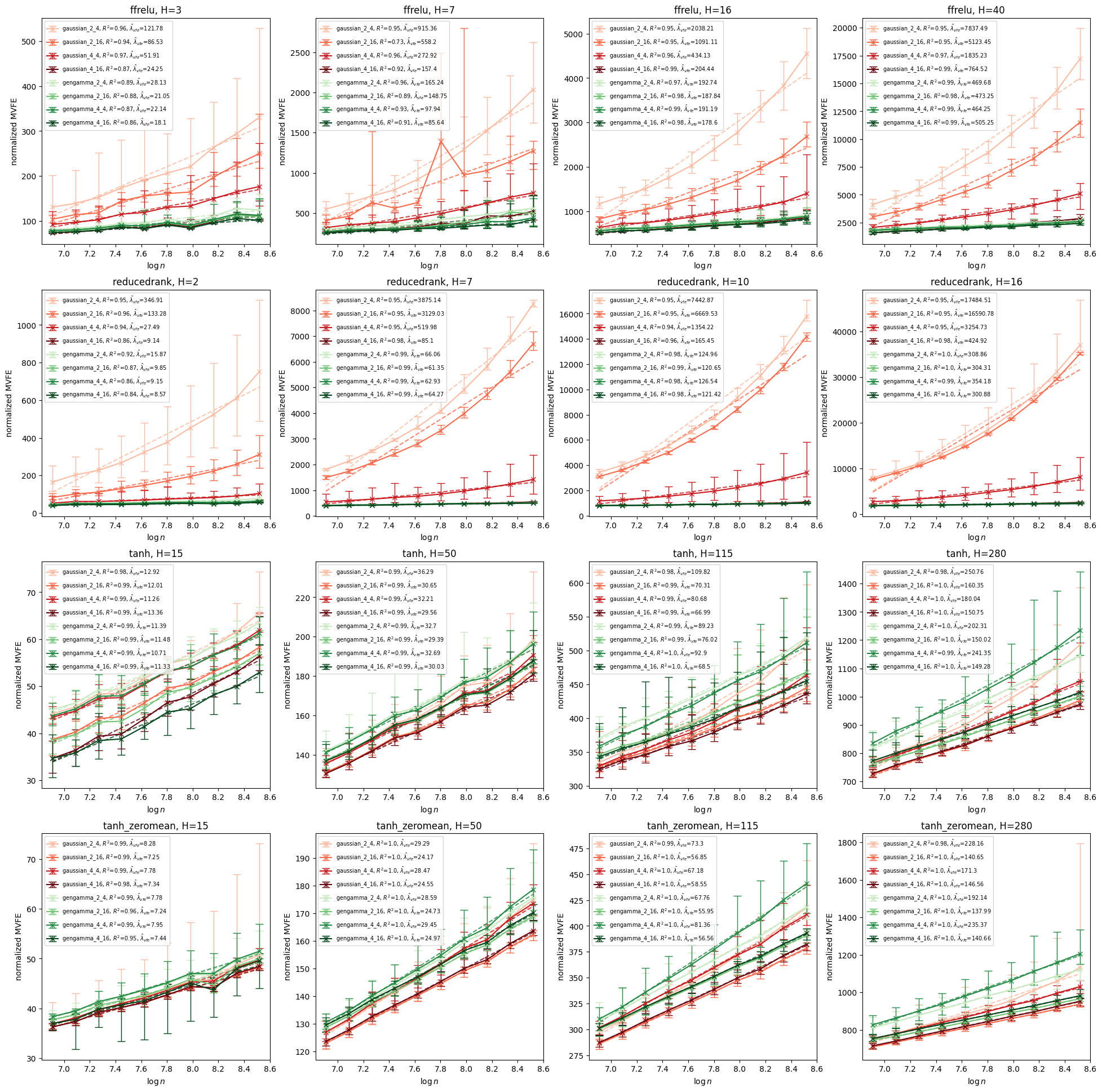}}
\caption{MVFE for all base distributions and $G_\theta$ architectures considered. Note that the first column of Figure \ref{fig:main_text_results} in the main text is a subset of the plots here.}
\label{MVFE_vs_logn_gengamma_gaussian}
\end{center}
\vskip -0.2in
\end{figure}

\begin{figure}[ht]
\vskip 0.2in
\begin{center}
\centerline{\includegraphics[width=\columnwidth]{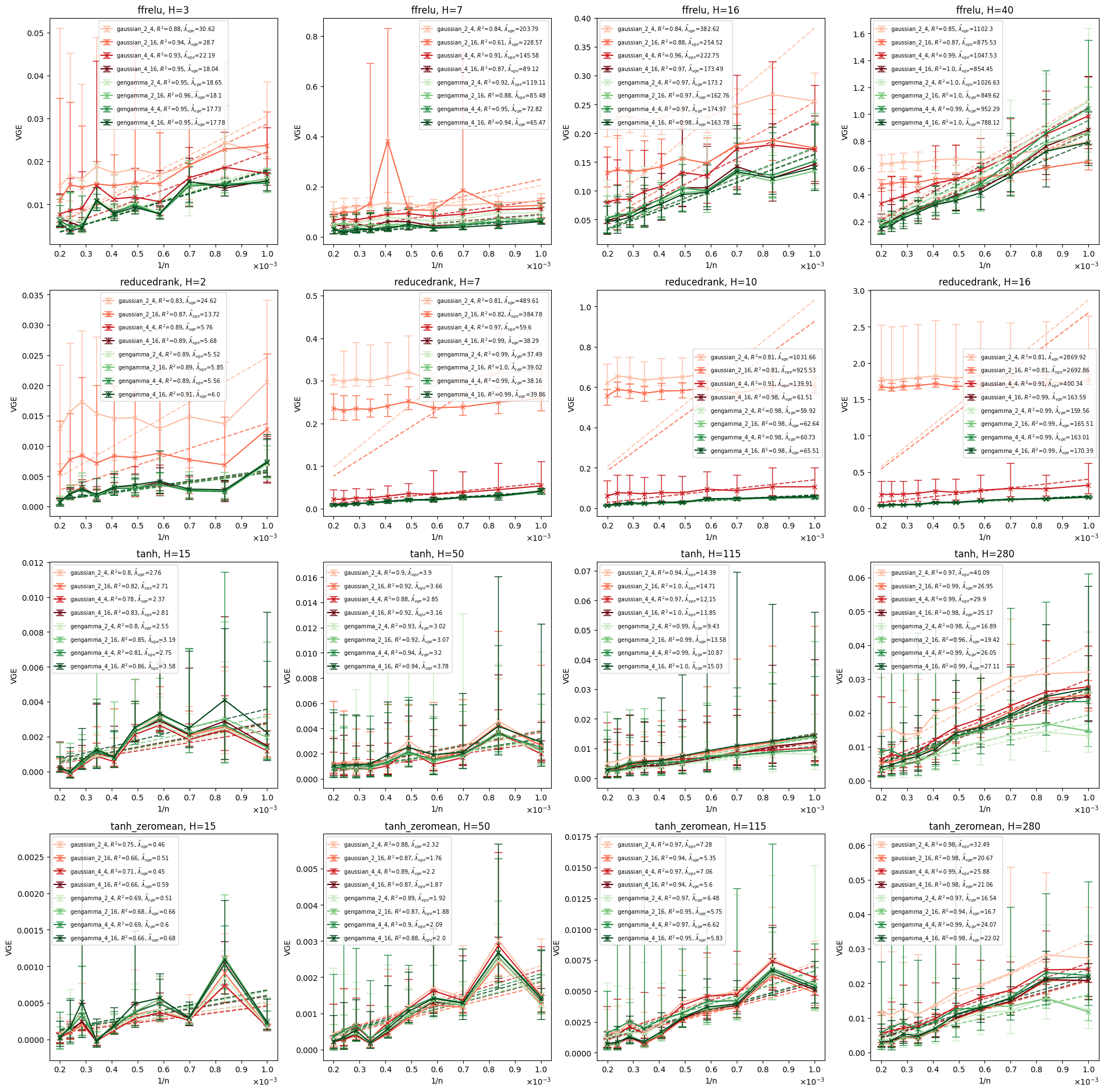}}
\caption{VGE for all base distributions and $G_\theta$ architectures considered. Note that the second column of Figure \ref{fig:main_text_results} in the main text is a subset of the plots here.}
\label{VGE_vs_inverse_n_gengamma_gaussian}
\end{center}
\vskip -0.2in
\end{figure}

\end{document}